\definecolor{darkgray}{rgb}{0.5,0.5,0.5}
\definecolor{lightpurple}{rgb}{0.9, 0.9, 1.0}
\definecolor{ForestGreen}{HTML}{228B22}
\newcommand{\rebuttal}[1]{\textcolor{black}{#1}}
\newtheorem{definition}{Definition}[section]
\newtheorem{lemma}{Lemma}[section]
\newtheorem{theorem}{Theorem}[section]
\newtheorem{corollary}{Corollary}[section]
\newcommand{\perfup}[1]{\textcolor{red}{\scriptsize$\uparrow$#1}}
\newcommand{\perfdown}[1]{\textcolor{ForestGreen}{\scriptsize$\downarrow$#1}}
\definecolor{lightpurple}{rgb}{0.9, 0.9, 1.0}
\title{Dynamic Generation of Multi LLM Agents Communication Topologies with Graph Diffusion Models}
\author{
 \textbf{Eric Hanchen Jiang\textsuperscript{1*}},
  \textbf{Mengting Li\textsuperscript{1*}},
  \textbf{Guancheng Wan\textsuperscript{1}},
  \textbf{Xiao Liang\textsuperscript{1}},
\\
  \textbf{Sophia Yin\textsuperscript{1}},
  \textbf{Yuchen Wu\textsuperscript{2}},
  \textbf{Xinfeng Li\textsuperscript{3}},
\\
  \textbf{Yizhou Sun\textsuperscript{1}},
  \textbf{Wei Wang\textsuperscript{1}},
   \textbf{Kai-Wei Chang\textsuperscript{1}},
  \textbf{Ying Nian Wu\textsuperscript{1}}
\\
\\
  \textsuperscript{1}University of California Los Angeles,
  \textsuperscript{2}University of Washington, \\
  \textsuperscript{3}Nanyang Technological University
}
\begin{document}
\maketitle

\begingroup
\renewcommand\thefootnote{}
\footnotetext{
\textsuperscript{*}Equal contribution. }
\endgroup

\begin{abstract}
The efficiency of multi-agent systems driven by large language models (LLMs) largely hinges on their communication topology. However, designing an optimal topology is a non-trivial challenge, as it requires balancing competing objectives such as task performance, communication cost, and robustness. Existing frameworks often rely on static or hand-crafted topologies, which inherently fail to adapt to diverse task requirements, leading to either excessive token consumption for simple problems or performance bottlenecks for complex ones. To address this challenge, we introduce a novel generative framework called \textit{Guided Topology Diffusion (GTD)}. Inspired by conditional discrete graph diffusion models, GTD formulates topology synthesis as an iterative construction process. At each step, the generation is steered by a lightweight proxy model that predicts multi-objective rewards (e.g., accuracy, utility, cost), enabling real-time, gradient-free optimization towards task-adaptive topologies. This iterative, guided synthesis process distinguishes GTD from single-step generative frameworks, enabling it to better navigate complex design trade-offs. We validated GTD across multiple benchmarks, and experiments show that this framework can generate highly task-adaptive, sparse, and efficient communication topologies, significantly outperforming existing methods in LLM agent collaboration. Our code is available \href{https://github.com/ericjiang18/diffusion_agent}{here}.
\end{abstract}

\section{Introduction}

Large language model (LLM) driven multi-agent systems (MAS) increasingly rely on structured communication to solve complex tasks, yet a core open problem is how to \emph{dynamically} design the communication topology for a given task and team. In practice, many systems still adopt hand-crafted or heuristic patterns (e.g., chain, star, or fully connected graphs) or workflow templates and role play frameworks \citep{wu2023autogen,hong2023metagpt,li2023camel,chen2023agentverse}. Such static or rule-based designs struggle to adapt to the intrinsic complexity of the task, the composition of skills required, or real-time progress. Classical MAS theory already shows that performance and robustness depend critically on the underlying graph (e.g., consensus rates and failure modes are tied to connectivity and spectral properties) \citep{zhu2006topologies,chen2013dynamical}. The mismatch manifests in practice: a simple Q\&A may need only a short linear exchange, whereas software development benefits from a richer collaboration network with project managers, programmers, and testers \citep{hong2023metagpt}. Using one pattern for all tasks either inflates token/communication overhead for simple problems or creates bottlenecks for complex ones \citep{zhang2024cutthecrap}. Recent efforts begin to \emph{optimize} or \emph{search} topologies, but typically emphasize end utility (accuracy) while underweighting other crucial dimensions such as communication cost (token consumption), robustness to agent failures/attacks, and sparsity/efficiency \citep{zhang2025gdesigner,sun2025assemble,zhou2025mad,hu2024adas,shang2024agentsquare}. Furthermore, their reliance on single-step generation mechanisms, such as variational auto-encoders, can limit the fine-grained exploration of the multi-objective design space. A principled topology designer should therefore seek Pareto-optimal trade-offs in a multi-objective space \citep{zhang2025adg}.

However, while these adaptive methods represent a significant step forward, they face two fundamental limitations. \textbf{(1)} First, their generative process often relies on single-step models like variational auto-encoders, which can struggle to capture the complex, long-range dependencies inherent in optimal communication structures. This may constrain the search space to topologies that are plausible but not truly Pareto-optimal. \textbf{(2)} Second, their optimization is often coarse-grained, applying reward signals only after a complete topology has been generated. Such post-hoc guidance makes it difficult to navigate the intricate trade-offs between competing objectives like task utility, token cost, and robustness in a fine-grained manner. The core research problem, therefore, is to develop a framework that can powerfully yet precisely construct topologies by integrating multi-objective guidance directly into each step of the generative process.

To address this challenge, we reframe topology synthesis as a guided, iterative construction process. We introduce \textbf{Guided Topology Diffusion (GTD)}, a framework that casts topology generation as a conditional discrete graph diffusion process, drawing on recent advances in generative modeling \citep{ho2020ddpm,song2021score,ho2021classifierfree,vignac2023digress}. By starting from a noisy graph and progressively denoising it, GTD leverages the strong generative capabilities of diffusion models to explore a richer design space. Crucially, we inject multi-objective guidance at each step of this reverse process. We achieve this by coupling the generator with a lightweight \emph{proxy reward model} and performing \emph{zeroth-order} (gradient-free) optimization during sampling, a scheme inspired by reward-modeling and gradient-free optimization practice \citep{nesterov2017random,liu2018zeroth,ouyang2022instructgpt}. This allows GTD to steer the generation trajectory in real-time, effectively balancing task utility, communication cost, and robustness to produce highly optimized, task-specific topologies.

%We address this gap by casting topology synthesis as \emph{conditional discrete graph diffusion} guided by task and team context. Concretely, our method—\textbf{Guided Topology Diffusion (GTD)}—starts from an empty/noisy adjacency and iteratively “adds” edges under a conditional denoising process, drawing on advances in diffusion generative modeling and graph diffusion \citep{ho2020ddpm,song2021score,ho2021classifierfree,vignac2023digress}. To inject task performance signals without requiring differentiable or low-cost evaluators, we couple the generator with a \emph{proxy reward model} and perform \emph{zeroth-order} (gradient-free) guidance during sampling, an explore–evaluate–select scheme inspired by gradient-free optimization and reward-modeling practice \citep{nesterov2017random,liu2018zeroth,ouyang2022instructgpt}. This enables GTD to steer generation toward topologies that jointly improve task utility while reducing token cost, increasing robustness, and encouraging sparsity.

\begin{figure}[t]
\centering
\includegraphics[width=\linewidth]{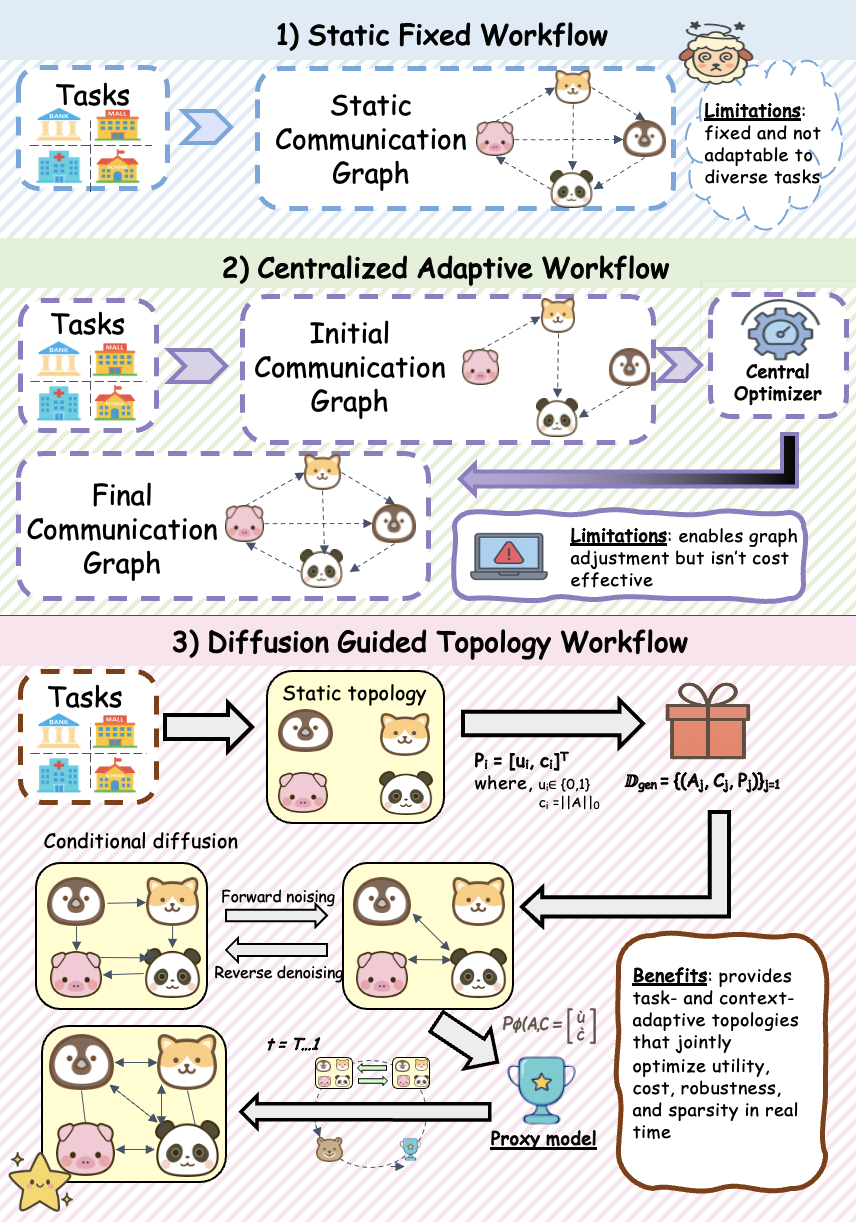} 

\caption{Comparison of Multi-Agent System (MAS) communication topology design workflows. \textbf{(1) Static Fixed Workflow}, \textbf{(2) Centralized Adaptive Workflow}, \textbf{(3) Diffusion Guided Topology Workflow (Ours)}. Our proposed method provides task- and context-adaptive topologies by using a conditional diffusion process guided by a proxy model to jointly optimize for utility, cost, robustness, and sparsity.}
\vspace{-1.5em}
\label{fig:wrap}
\end{figure}

In summary, our contributions are threefold:

\begin{itemize}%[leftmargin=*]
    \item[\ding{182}] \textbf{Problem Level:} We propose GTD, a novel conditional discrete graph diffusion framework for dynamically generating multi-agent communication topologies.
    \item[\ding{183}] \textbf{Algorithm Level:} We design and implement a proxy model-based zeroth-order optimization guidance algorithm, which effectively optimizes non-differentiable, high-cost external objectives during the diffusion process.
    \item[\ding{184}] \textbf{Framework Level:} We construct a complete end-to-end solution that integrates advanced semantic feature encoding, conditional graph diffusion generation, and multidimensional protocol-based dynamic guidance, providing a new paradigm for solving such complex graph generation problems.
\end{itemize}

\vspace{-1.0em}
\section{Related Work}

Classical MAS research establishes that communication topology fundamentally shapes global behavior, particularly regarding consensus speed and system robustness \citep{zhu2006topologies,zhu2003cooperation,chen2013dynamical,helsinger2004cougaar,ayala2025topology}. While analyses of star networks quantify the inherent trade-offs between rapid information propagation and single-point failure risks \citep{chowdhury2017fast,gong2015nkstar}, recent structural optimization studies focus on Pareto fronts that balance accuracy with resilience \citep{xiao2013cooperative,zhang2025adg,structuralMA2023}. In the specific context of LLM agents, contemporary work aims to reduce token overhead or optimize collaboration schedules \citep{zhang2024cutthecrap,yang2025anymac,zhou2025mad}. More advanced methods explicitly learn task-aware topologies using GNNs or autoregressive models to address scalability and latency in decentralized settings \citep{zhang2025gdesigner,sun2025assemble,li2025expocomm,zhang2023dacom,zhu2025reducing}. Building upon recent progress in conditional graph diffusion and broader generative paradigms \citep{xu2024disco,vignac2023digress,madeira2024construct,you2018gcpn,lo2024contrastivecomm,du2024ideal,ding2024magi,hu2024commformer,zhao2024pard,ji2025cora}, our \textbf{GTD} uniquely integrates proxy-guided zeroth-order optimization during sampling to directly steer generation toward multi-objective optima.

\section{Preliminaries}

%Our work, Guided Topology Diffusion (GTD), introduces a new framework for synthesizing multi-agent communication topologies. This approach is built on the foundation of denoising diffusion models for generative tasks and addresses the core challenge of optimizing these models with expensive, non-differentiable objectives. 
In this section, we formalize the problem of topology generation and describe the underlying principles of graph diffusion models\rebuttal{~\citep{ho2020ddpm,song2021score}}, pinpointing the limitations that motivate our proposed method.

\subsection{Formalizing Topology Generation as a Conditional Generative Problem}

The design of an optimal communication topology for a Multi-Agent System (MAS) can be framed as a conditional graph generation problem. Given a set of $N$ agents, their communication structure is represented by a directed graph $G = (V, E)$, where $|V| = N$. This graph is fully described by its adjacency matrix $A \in \{0, 1\}^{N \times N}$, where $A_{ij} = 1$ signifies that agent $i$ can send a message to agent $j$.

\paragraph{Optimization Objective.} For a given task query $q$ and a set of available agents, which together form a task-specific condition vector $C$, the goal is to discover an optimal adjacency matrix $A^*$ that maximizes a composite reward function $\mathcal{R}(A, C)$. This function evaluates the quality of a topology based on multiple criteria:
\begin{align}
\max_{A}\ \mathcal{R}(A,C)
&= f(\text{Utility}(A,C), \text{Cost}(A,C), \notag\\
&\quad \text{Sparsity}(A), \ldots)
\label{eq:reward_func}
\end{align}

Here, \texttt{Utility} measures task success (e.g., accuracy), \texttt{Cost} quantifies token consumption or communication overhead, and \texttt{Sparsity} encourages efficiency. Evaluating $\mathcal{R}(A, C)$ is computationally expensive, as it requires executing a full, costly multi-agent simulation for each candidate graph $A$.

\begin{figure*}[t]
    \centering
    
    \vspace{-1.5em}
    \includegraphics[width=\textwidth]{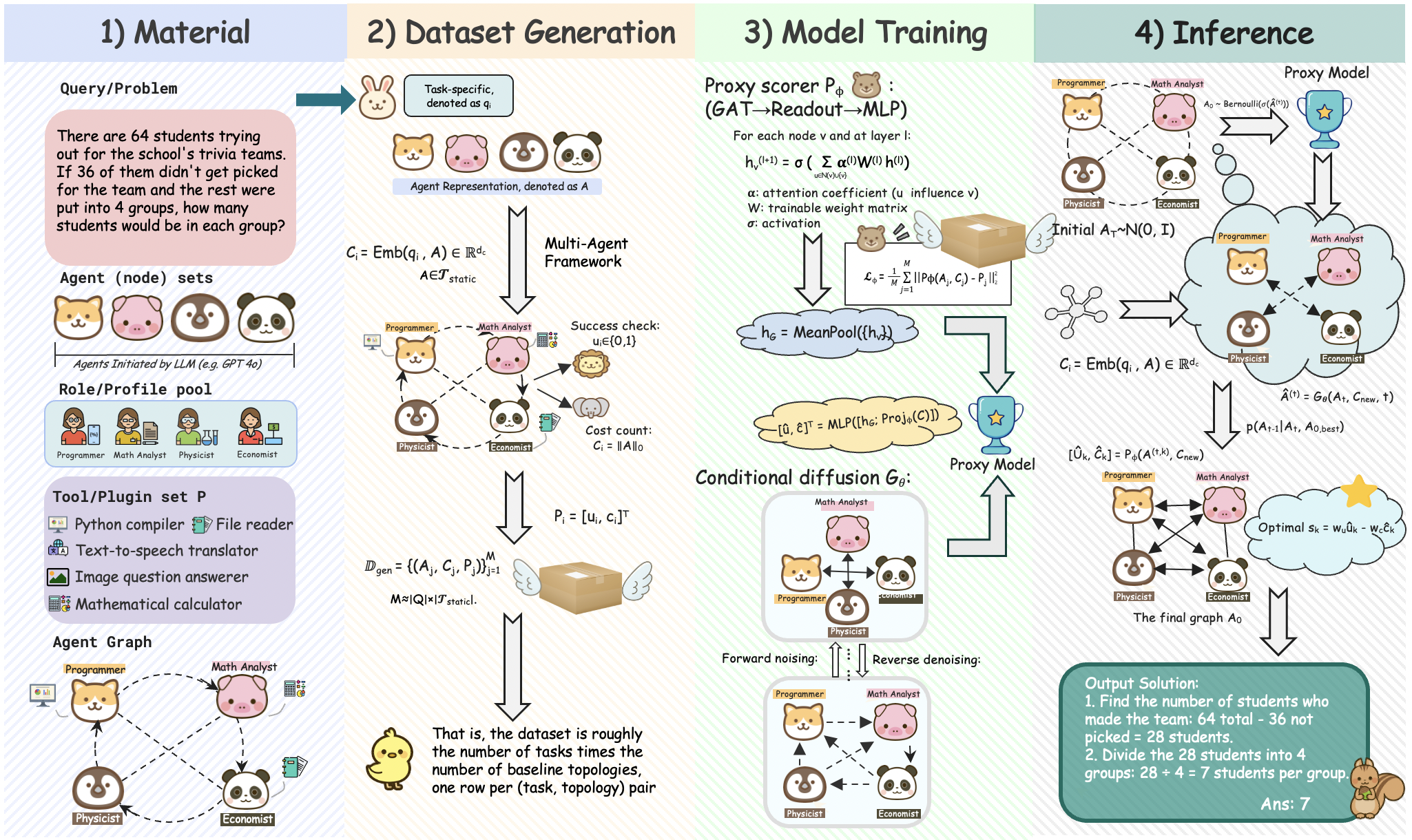}     
    \caption{\textbf{The Guided Topology Diffusion (GTD) framework workflow}, divided into four main stages. \textbf{1) Material:} The process begins with task-specific inputs, including the query, available agents, and tools. \textbf{2) Dataset Generation:} A multi-agent framework simulates various baseline topologies to generate a foundational dataset linking topologies to performance outcomes (e.g., utility and cost). \textbf{3) Model Training:} The generated dataset is used to train two core components: a lightweight proxy scorer ($P_{\phi}$) to predict topology performance and a conditional graph diffusion generator ($G_{\theta}$) to learn the structure of high-performing graphs. \textbf{4) Inference:} For a new task, the framework uses the trained models to iteratively denoise a random graph, with the proxy scorer guiding each step to synthesize a final, task-optimized topology.}
    \vspace{-1.0em}
    \label{fig:wide}
\end{figure*}

\subsection{Denoising Diffusion Models for Graph Generation}

Denoising diffusion models are a class of powerful generative models that learn to synthesize data by reversing a gradual noising process \rebuttal{\citep{ho2020ddpm,song2021score,vignac2023digress}}. We adapt this paradigm for discrete graph structures.

\paragraph{Forward Diffusion Process.} The forward process, $q(A_t | A_0)$, systematically corrupts an initial graph $A_0$ by adding noise over $T$ discrete timesteps. To operate in a continuous space, we first scale the adjacency matrix entries from $\{0, 1\}$ to $\{-1, 1\}$\rebuttal{, a standard practice that centers the data around zero for proper convergence to $\mathcal{N}(0, I)$ under the variance-preserving assumption~\citep{ho2020ddpm}}. The forward process is then defined as a variance-preserving schedule that adds Gaussian noise:
\begin{equation}
\label{eq:forward_process}
q(A_t | A_0) = \mathcal{N}(A_t; \sqrt{\bar{\alpha}_t} A_0, (1 - \bar{\alpha}_t)I)
\end{equation}
where $\{\beta_t\}_{t=1}^T$ is a predefined noise schedule, $\alpha_t = 1 - \beta_t$, and $\bar{\alpha}_t = \prod_{s=1}^t \alpha_s$. As $t \to T$, the distribution of $A_T$ converges to a standard isotropic Gaussian distribution, $\mathcal{N}(0, I)$.

\paragraph{Learned Reverse Process.} The generative model learns the reverse process, $p_\theta(A_{t-1} | A_t, C)$, to denoise a noisy graph $A_t$ and recover a cleaner version $A_{t-1}$, conditioned on the task context $C$. This is parameterized by a denoising network $\mathcal{G}_\theta(A_t, C, t)$, which is trained to predict the original clean graph $A_0$ from its noisy counterpart $A_t$. The training objective for $\mathcal{G}_\theta$ is to minimize the reconstruction error over a dataset of high-performing graphs:
\begin{align}
\mathcal{L}_\theta
&= \mathbb{E}_{t, A_0, C, \epsilon}\Bigl[
\bigl\| A_0 - \mathcal{G}_\theta(\sqrt{\bar{\alpha}_t}A_0
+ \sqrt{1-\bar{\alpha}_t}\epsilon, \notag\\
&\quad C, t) \bigr\|^2
\Bigr]
\label{eq:loss_theta}
\end{align}

where $\epsilon \sim \mathcal{N}(0, I)$. Once trained, we can generate a new graph by sampling $A_T \sim \mathcal{N}(0, I)$ and iteratively applying the denoising network to obtain $A_0$.

\subsection{The Challenge: Guiding Generation with a Black-Box Objective}

A standard conditional diffusion model can generate topologies that are statistically similar to those in the training data, but it cannot explicitly optimize for the external reward function $\mathcal{R}(A, C)$ during generation\rebuttal{~\citep{ho2021classifierfree,dhariwal2021diffusion}}. Steering the denoising process toward high-reward structures presents two major obstacles. First, the true reward function $\mathcal{R}$ is too slow to be used for guidance within the iterative sampling loop, a challenge of \textbf{high-cost evaluation}. Second, the reward is a \textbf{non-differentiable ``black-box" objective}; the output of the denoising network, $\mathcal{G}_\theta$, is a continuous prediction that must be converted into a discrete graph $A$ before evaluation, and this sampling step breaks the end-to-end differentiability, rendering gradient-based guidance techniques inapplicable\rebuttal{~\citep{nesterov2017random}}. To overcome these challenges, we reframe the problem by introducing a method for efficient, gradient-free guidance. This is achieved by first training a lightweight \textbf{surrogate model (or proxy)} that accurately approximates the expensive reward $\mathcal{R}$\rebuttal{~\citep{ouyang2022instructgpt}} and then using this proxy during inference to guide the diffusion sampling process with a \textbf{Zeroth-Order (ZO) optimization} scheme\rebuttal{~\citep{liu2018zeroth}}. This approach transforms the generation process from a simple denoising task into a guided synthesis, allowing us to directly optimize for task-specific, multi-objective rewards without requiring differentiability.

\section{Methodology}
\label{sec:methodology}

Our framework, \textbf{Guided Topology Diffusion (GTD)}, learns to generate optimal communication topologies for Multi-Agent System (MAS). GTD comprises two core components: (1) a \textbf{surrogate reward model}, $\mathcal{P}_\phi$, that approximates the expensive simulation outcomes, and (2) a \textbf{conditional diffusion generator}, $\mathcal{G}_\theta$, that learns the distribution of high-performing graph structures. We first train these components on a pre-computed dataset and then integrate them for a novel, guided synthesis process at inference time.

\subsection{Surrogate Reward Model}
\label{sec:surrogate}

To circumvent the computational cost of direct simulation, we first train a surrogate model $\mathcal{P}_\phi$ to predict the performance of a given topology. This model maps a graph-condition pair $(A, C)$ to a performance vector $[\hat{u}, \hat{c}]^T$, representing the predicted task utility and communication cost, respectively.

\paragraph{Architecture.} The surrogate $\mathcal{P}_\phi$ is implemented as a Graph Neural Network (GNN)\rebuttal{~\citep{kipf2017semi}}. Specifically, we employ a series of Graph Attention (GAT) layers\rebuttal{~\citep{velickovic2018gat}} to learn expressive node representations. The update rule for a node $v$'s hidden state $\mathbf{h}_v$ from layer $(l)$ to $(l+1)$ is given by:
\begin{equation}
\mathbf{h}_v^{(l+1)} = \sigma\left(\sum_{u \in \mathcal{N}(v) \cup \{v\}} \alpha_{vu}^{(l)} \mathbf{W}^{(l)}\mathbf{h}_u^{(l)}\right)
\end{equation}
where $\alpha_{vu}^{(l)}$ are the learned attention coefficients between nodes $v$ and $u$. The final node embeddings are aggregated via mean pooling to produce a graph-level representation $\mathbf{h}_G$. This is concatenated with the projected task condition vector $C$ and processed by a multi-layer perceptron (MLP) to yield the final prediction: $[\hat{u}, \hat{c}]^T = \text{MLP}_\phi([\mathbf{h}_G; \text{Proj}_\phi(C)])$.

\paragraph{Training.} We first generate a foundational dataset $\mathcal{D}_{\text{gen}} = \{ (A_j, C_j, P_j) \}_{j=1}^{M}$ by running simulations for a diverse set of baseline topologies across various tasks. The model $\mathcal{P}_\phi$ is then trained to minimize the Mean Squared Error (MSE) loss between its predictions and the ground-truth performance vectors from simulation:
\begin{equation}
\mathcal{L}_\phi = \frac{1}{M} \sum_{j=1}^{M} \| \mathcal{P}_\phi(A_j, C_j) - P_j \|_2^2
\end{equation}
\paragraph{Model Fidelity.} To ensure the surrogate provides effective guidance during the zeroth-order optimization step, we evaluated its performance on a held-out test split of the training dataset. The model achieves a low Mean Squared Error (MSE) for both utility and cost objectives, indicating it captures the underlying performance landscape accurately. Furthermore, we observed a strong positive correlation between the predicted and ground-truth cost metrics. Most importantly, when used to rank candidate graphs, the top-1 choice selected by the surrogate consistently coincides with the true best candidate in the majority of cases. These results confirm that $\mathcal{P}_\phi$ possesses sufficient ranking fidelity to steer the diffusion process toward Pareto-optimal regions.

\subsection{Conditional Graph Diffusion Generator}
\label{sec:diffusion}

The core of our generative framework is a conditional diffusion model, $\mathcal{G}_\theta$, designed to learn the distribution of high-quality topologies, $p_\theta(A | C)$. {We explicitly chose Diffusion over single-shot approaches (e.g., VAEs or Gumbel-Softmax) to enable \textit{iterative refinement}. In a discrete topology space, a single ``wrong" edge can break the communication flow; diffusion allows our proxy model to intervene at every step of the construction process, gently steering the graph toward high-reward regions gradually rather than risking mode collapse typical of one-shot generators.}

Here, in Figure~\ref{fig:adaptive}, we provide a visual contrast between common static topologies and the sparse, adaptive structures that our generator is designed to create. This distinction highlights the framework's goal: to move beyond one-size-fits-all patterns towards topologies optimized for the specific demands of a given task. We model the adjacency matrix $A \in \{0, 1\}^{N \times N}$ by scaling its values to $\{-1, 1\}$ and performing diffusion in a continuous space.

\begin{figure}[t]
\centering

\includegraphics[width=\linewidth]{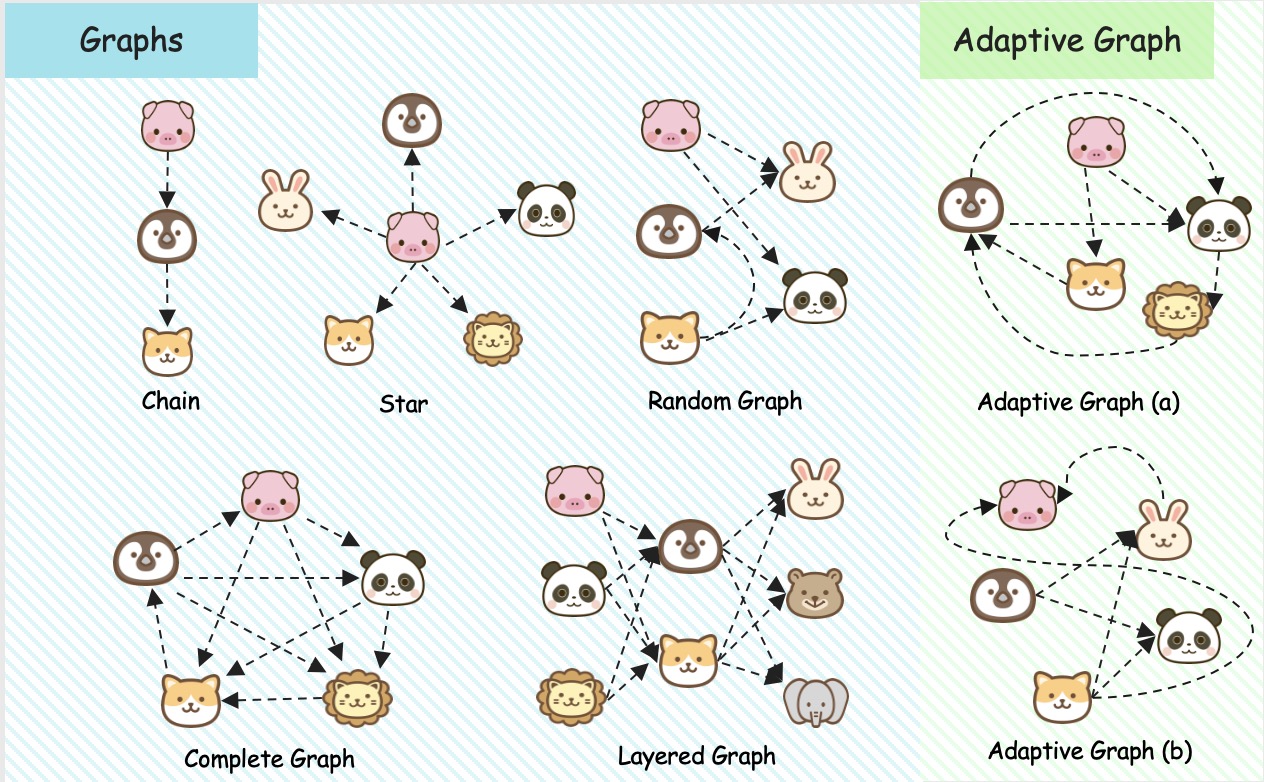} 

\caption{\textbf{An illustration of different multi-agent communication topologies.} The left panel shows examples of common static or heuristic graphs, such as \textbf{Chain}, \textbf{Star}, \textbf{Complete}, \textbf{Layered}, and \textbf{Random} graphs. The right panel shows examples of \textbf{Adaptive Graphs}, which represent the sparse, task-specific topologies that the GTD framework is designed to generate dynamically.}

    \label{fig:adaptive}
\end{figure}

\begin{table*}[!t]
\centering
\renewcommand\arraystretch{1.2}

\resizebox{\textwidth}{!}{%
\begin{tabular}{@{}lccccccc@{}}
\toprule
\rowcolor{gray!25} 
\textbf{Method} & \textbf{GSM8K} & \textbf{MATH} & \textbf{MultiArith} & \textbf{HumanEval} & \textbf{MMLU} & \textbf{SVAMP} & \textbf{Avg.} \\
\midrule
Vanilla~\citep{openai2023gpt4} & 87.45 & 46.29 & 96.85 & 87.08 & 82.14 & 86.67 & 81.75 \\
\rowcolor{gray!10}
CoT~\citep{wei2022cot} & 87.10 \perfdown{0.35} & 46.40 \perfup{0.11} & 96.31 \perfdown{0.54} & 88.13 \perfup{1.05} & 82.65 \perfup{0.51} & 87.33 \perfup{0.66} & 81.99 \perfup{0.24} \\
ComplexCoT~\citep{fu2022complexcot} & 86.89 \perfdown{0.56} & 46.53 \perfup{0.24} & 96.70 \perfdown{0.15} & 87.49 \perfup{0.41} & 83.78 \perfup{1.64} & 87.67 \perfup{1.00} & 81.84 \perfup{0.09} \\
\rowcolor{gray!10}
SC (CoT$\times$5)~\citep{wang2023selfconsistency} & 87.57 \perfup{0.12} & 47.91 \perfup{1.62} & 96.58 \perfdown{0.27} & 88.60 \perfup{1.52} & 82.66 \perfup{0.52} & 88.00 \perfup{1.33} & 81.89 \perfup{0.14} \\
MultiPersona~\citep{wang2023spp} & 87.50 \perfup{0.05} & 45.43 \perfdown{0.86} & 97.49 \perfup{0.64} & 88.32 \perfup{1.24} & 83.65 \perfup{1.51} & 87.00 \perfup{0.33} & 81.90 \perfup{0.15} \\
\rowcolor{gray!10}
LLM-Debate~\citep{du2023llmdebate} & 89.47 \perfup{2.02} & 48.54 \perfup{2.25} & 97.33 \perfup{0.48} & 88.68 \perfup{1.60} & 83.69 \perfup{1.55} & 89.00 \perfup{2.33} & 82.79 \perfup{1.04} \\
LLM-Blender~\citep{jiang2023llmblender} & 88.35 \perfup{0.90} & 46.92 \perfup{0.63} & 97.29 \perfup{0.44} & 88.80 \perfup{1.72} & 81.22 \perfdown{0.92} & 87.33 \perfup{0.66} & 81.65 \perfdown{0.10} \\
\rowcolor{gray!10}
DyLAN~\citep{liu2023dylan} & 89.98 \perfup{2.53} & 48.63 \perfup{2.34} & 97.12 \perfup{0.27} & 90.42 \perfup{3.34} & 80.16 \perfdown{1.98} & 88.67 \perfup{2.00} & 82.50 \perfup{0.75} \\
AgentVerse~\citep{chen2023agentverse} & 89.91 \perfup{2.46} & 47.35 \perfup{1.06} & 97.50 \perfup{0.65} & 89.29 \perfup{2.21} & 81.22 \perfdown{0.92} & 88.33 \perfup{1.66} & 82.27 \perfup{0.52} \\
\rowcolor{gray!10}
MacNet~\citep{qian2024macnet} & 87.95 \perfup{0.50} & 45.18 \perfdown{1.11} & 96.03 \perfdown{0.82} & 84.57 \perfdown{2.51} & 79.85 \perfdown{2.29} & 86.00 \perfdown{0.67} & 79.93 \perfdown{1.82} \\
AutoAgents~\citep{chen2023autoagents} & 87.69 \perfup{0.24} & 45.32 \perfdown{0.97} & 96.42 \perfdown{0.43} & 87.64 \perfup{0.56} & 82.13 \perfdown{0.01} & 86.34 \perfdown{0.33} & 80.96 \perfdown{0.79} \\
\rowcolor{gray!10}
GPTSwarm~\citep{zhuge2024gptswarm} & 89.14 \perfup{1.69} & 47.88 \perfup{1.59} & 96.79 \perfdown{0.06} & 89.32 \perfup{2.24} & 83.98 \perfup{1.84} & 88.67 \perfup{2.00} & 82.96 \perfup{1.21} \\
ADAS~\citep{hu2024adas} & 86.12 \perfdown{1.33} & 43.18 \perfdown{3.11} & 96.02 \perfdown{0.83} & 84.19 \perfdown{2.89} & 77.93 \perfdown{4.21} & 86.33 \perfdown{0.34} & 78.96 \perfdown{2.79} \\
\rowcolor{gray!10}
AgentSquare~\citep{shang2024agentsquare} & 87.62 \perfup{0.17} & 48.51 \perfup{2.22} & 97.77 \perfup{0.92} & 89.08 \perfup{2.00} & 79.85 \perfdown{2.29} & 88.00 \perfup{1.33} & 81.81 \perfup{0.06} \\
AFlow~\citep{zhang2025aflow} & 91.16 \perfup{3.71} & 51.28 \perfup{4.99} & 96.22 \perfdown{0.63} & 90.93 \perfup{3.85} & 83.28 \perfup{1.14} & 88.33 \perfup{1.66} & 83.53 \perfup{1.78} \\
\rowcolor{gray!10}
G-Designer~\citep{zhang2025gdesigner} & 92.09 \perfup{4.64} & 51.00 \perfup{4.71} & 97.78 \perfup{0.93} & 91.11 \perfup{4.03} & 84.50 \perfup{2.36} & 90.00 \perfup{3.33} & 84.41 \perfup{2.66} \\
MaAS~\citep{zhang2025maas} & 92.30 \perfup{4.85} & 51.82 \perfup{5.53} & 98.80 \perfup{1.95} & 90.56 \perfup{3.48} & 83.78 \perfup{1.64} & 89.67 \perfup{3.00} & 84.49 \perfup{2.74} \\
\midrule
\rowcolor{lightpurple}  
\textbf{GTD (Ours)} & \textbf{94.14} \perfup{6.69} & \textbf{54.07} \perfup{7.78} & \textbf{98.88} \perfup{2.03} & \textbf{91.46} \perfup{4.38} & \textbf{84.58} \perfup{2.44} & \textbf{91.33} \perfup{4.66} & \textbf{85.74} \perfup{3.99} \\
\bottomrule
\end{tabular}%
}

\caption{
\textbf{Performance comparison on various benchmarks.} 
All scores are accuracy (\%). 
Changes are reported relative to the \textbf{Vanilla} baseline. 
\perfup{} and \perfdown{} denote performance improvement and degradation, respectively. 
The \textbf{best result} in each column is bolded.
}
\label{tab:benchmark-results}
\end{table*}

\paragraph{Diffusion Process.} We utilize a variance-preserving forward process $q(A_t | A_0)$ that gradually adds Gaussian noise to an initial graph $A_0$ over $T$ timesteps:
\begin{equation}
q(A_t | A_0) = \mathcal{N}(A_t; \sqrt{\bar{\alpha}_t}A_0, (1-\bar{\alpha}_t)\mathbf{I})
\end{equation}
where $\{\beta_t\}_{t=1}^T$ is a fixed variance schedule and $\bar{\alpha}_t = \prod_{s=1}^t (1-\beta_s)$. The objective is to learn the reverse process $p_\theta(A_{t-1} | A_t, C)$ to denoise a noisy graph $A_t$ back towards a clean, high-performance graph, conditioned on the task vector $C$.

\paragraph{Denoising Network and Training.} We parameterize the reverse process with a denoising network $\mathcal{G}_\theta(A_t, C, t)$, which is implemented as a \textbf{Graph Transformer}\rebuttal{~\citep{yun2019graph,dwivedi2021generalization}}. This architecture's global attention mechanism is well-suited for capturing long-range dependencies inherent in graph topology optimization. {Critically, the Graph Transformer ensures that edges are not generated independently; the prediction of any single edge $(i, j)$ is conditioned on the global context of all other nodes via self-attention, allowing the model to learn complex structural dependencies (e.g., cycles or hierarchies).} The network is trained to predict the original graph $A_0$ from its noised version $A_t$. To focus the model on generating effective topologies, we train it exclusively on a high-performance subset $\mathcal{D}_{\text{hq}} \subset \mathcal{D}_{\text{gen}}$, where graphs exceed a certain performance threshold. The training objective is to minimize the binary cross-entropy (BCE) loss:
\begin{align}
\mathcal{L}_\theta
&= \mathbb{E}_{t, A_0 \sim p_{\text{hq}}, C, \epsilon}\Bigl[
\mathrm{BCE}\bigl(\mathcal{G}_\theta(\sqrt{\bar{\alpha}_t}A_0
+ \sqrt{1-\bar{\alpha}_t}\epsilon, \notag\\
&\quad C, t), A_0\bigr)
\Bigr]
\label{eq:loss_theta_bce}
\end{align}

where $\epsilon \sim \mathcal{N}(0, \mathbf{I})$. This objective serves as a practical surrogate for maximizing the true Evidence Lower Bound, a connection we formalize in Appendix~\ref{sec:theoretical_justification} (see Theorem~\ref{thm:denoising_optimality}).

\subsection{Proxy-Guided Topology Synthesis}
\label{sec:guided_synthesis}

At inference, we synthesize a topology for a novel task condition $C_{\text{new}}$ by steering the diffusion process with the trained surrogate model $\mathcal{P}_{\phi^*}$. {The condition vector $C$ is formed by concatenating the semantic embedding of the task query $q$ (obtained via a pre-trained encoder) with the current graph state embeddings. This ensures the guidance is context-aware.}

Standard guidance techniques (e.g., classifier-free guidance\rebuttal{~\citep{ho2021classifierfree}}) require gradients from the guiding model. However, our surrogate $\mathcal{P}_{\phi^*}$ evaluates discrete graph samples, making its output non-differentiable with respect to the generator's continuous predictions.

To overcome this, we introduce a \textbf{zero-order (ZO) optimization}\rebuttal{~\citep{nesterov2017random,liu2018zeroth}} step within each denoising iteration. As detailed in Algorithm~\ref{alg:gtd}, at each timestep $t$, we first use the generator $\mathcal{G}_{\theta^*}$ to predict the unguided clean graph, $\hat{A}_0^{(t)}$. We then sample $K$ discrete candidate graphs from this prediction. The surrogate model $\mathcal{P}_{\phi^*}$ evaluates all candidates, and we select the one that maximizes our composite reward objective:

\begin{align}
A_{0,\text{best}}^{(t)}
&= \arg\max_{A_{0,k}^{(t)}} \Bigl( w_u \cdot \hat{u}_k - w_c \cdot \hat{c}_k \Bigr)
\notag\\
&\quad \text{s.t.}\quad
[\hat{u}_k, \hat{c}_k]^T
= \mathcal{P}_{\phi^*}(A_{0,k}^{(t)}, C_{\text{new}})
\label{eq:a0_best}
\end{align}

This best-ranked candidate, $A_{0, \text{best}}^{(t)}$, is then used in place of the original prediction $\hat{A}_0^{(t)}$ to compute the posterior distribution $q(A_{t-1}|A_t, A_{0, \text{best}}^{(t)})$ for sampling the next state $A_{t-1}$. 

%{By shifting the mean of the posterior distribution toward $A_{0, \text{best}}^{(t)}$, we effectively bias the sampling trajectory toward high-reward regions without requiring gradients.} This procedure directly injects task-specific performance objectives into the generative trajectory.

 This procedure directly injects task-specific performance objectives into the generative trajectory, guiding the synthesis towards topologies that are optimized for the given task. The effectiveness of this guidance is directly tied to the fidelity of the surrogate model $\mathcal{P}_{\phi^*}$. In Appendix~\ref{sec:theoretical_justification}, we formally bound the performance gap of the resulting topology as a function of the surrogate's approximation error (Theorem~\ref{thm:perf_gap}).

\begin{figure*}[t]
\vspace{-1.5em}
    \centering
    \includegraphics[width=\textwidth]{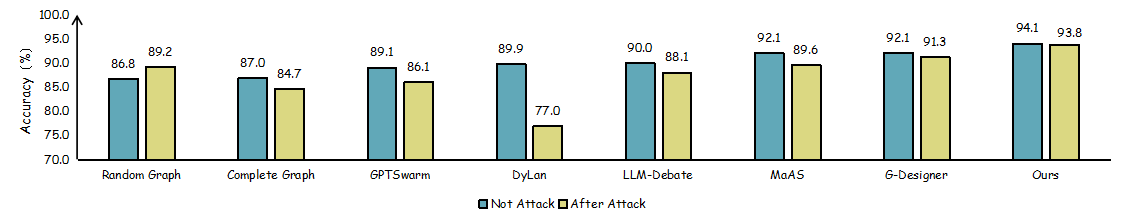} 
    \caption{\textbf{Robustness of various multi-agent systems to simulated agent failure on the GSM8K benchmark.} The chart compares task accuracy before and after an attack, demonstrating that topologies generated by GTD exhibit greater resilience and more graceful performance degradation compared to other methods.}
    \label{fig:attack}
\end{figure*}

\section{Experiments}
\label{sec:experiments}

To validate the effectiveness of our proposed \textbf{GTD} framework, we conduct a comprehensive set of experiments designed to evaluate its performance across three key dimensions: \textbf{(1) task-solving effectiveness}, \textbf{(2) communication cost-efficiency}, and \textbf{(3) robustness against agent failures}. 

{Our experimental setup is standardized across all evaluations to enable fair comparisons. The backbone for all agents is \textbf{GPT-4o-mini}. In our primary experiments, we deploy domain-specific agent teams: four \textbf{MathSolver} agents for the mathematics datasets (GSM8K, MATH, MultiArith, and SVAMP); four \textbf{CodeSolver} agents for the coding dataset (HumanEval); and three \textbf{KnowledgeableAcademic} agents for the science dataset (MMLU). The surrogate reward model ($\mathcal{P}_{\phi}$) in the \textbf{GTD} framework is a Graph Neural Network with two GAT layers and a hidden dimension of 32, trained for 10 epochs using the Adam optimizer {with a learning rate of $1\text{e-}3$ and a batch size of 16} to minimize mean squared error loss. The conditional diffusion generator ($\mathcal{G}_{\theta}$) is a two-layer Graph Transformer with two attention heads {optimized with a learning rate of $1\text{e-}4$}, and the diffusion process runs for 50 timesteps. {To demonstrate data efficiency, the} training dataset for these models was constructed by evaluating baseline topologies on datasets. During inference, proxy-guided synthesis applies a zeroth-order optimization step, evaluating five candidate graphs ($K=5$) at each timestep to guide the generation process {using an inference batch size of 2}. {
The training dataset was constructed by evaluating baseline topologies on a minimal subset of only 50 samples from the training set. Using GSM8K as an example, this approach demonstrates high data efficiency, as the initialization overhead is negligible; the one-time token cost for generating training data ($\approx 4.0 \times 10^5$ tokens) is rapidly amortized by the millions of tokens saved during inference on the full test set ($\approx 4.4 \times 10^6$ tokens per run), resulting in significant net efficiency gains for the system.}}

During inference, proxy-guided synthesis applies a zeroth-order optimization step, evaluating five candidate graphs at each timestep to guide the generation process.

\subsection{Task-Solving Effectiveness}

First, we evaluated GTD's ability to generate high-utility communication topologies by comparing its task-solving performance against a wide range of established multi-agent methods. We used several popular benchmarks for this comparison, including GSM8K, MATH, MultiArith for mathematical reasoning, and HumanEval for code generation. Baselines include canonical prompting strategies like Chain-of-Thought (CoT)~\citep{wei2022cot} as well as more recent agentic frameworks such as AgentVerse~\citep{chen2023agentverse}, AFlow~\citep{zhang2025aflow}, and MaAS~\citep{zhang2025maas}. For each task, GTD generates a bespoke communication topology conditioned on the problem description, and the resulting multi-agent system solves the task. Performance is measured by task-specific accuracy.

As shown in Table~\ref{tab:benchmark-results}, GTD demonstrates superior performance across the majority of benchmarks. It achieves state-of-the-art results on GSM8K (94.14), MATH (54.07), MultiArith (98.88), and SVAMP (91.30), significantly outperforming all baselines. For instance, on the challenging MATH dataset,  GTD improves upon the strongest baseline (MaAS) by over 2 absolute percentage points. This highlights our framework's ability to generate highly effective, task-adaptive topologies that facilitate better collaboration among agents compared to static or heuristically-designed communication structures.

\begin{figure}[t]
    \centering
    % 建議先不要用負的 vspace，避免又跟文字重疊 [web:79]
    % \vspace{-1em}

    \includegraphics[width=0.23\textwidth]{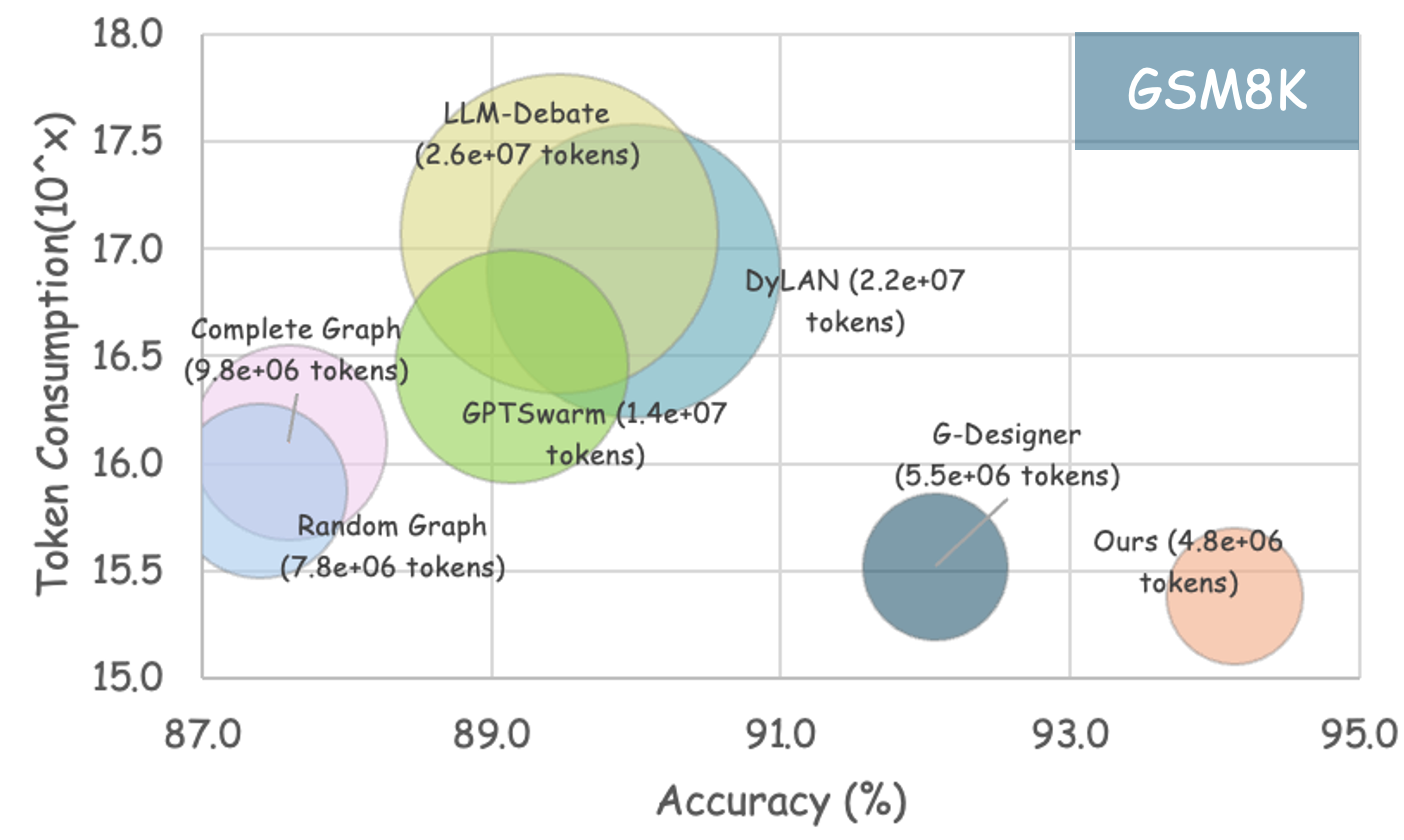}\hfill
    \includegraphics[width=0.23\textwidth]{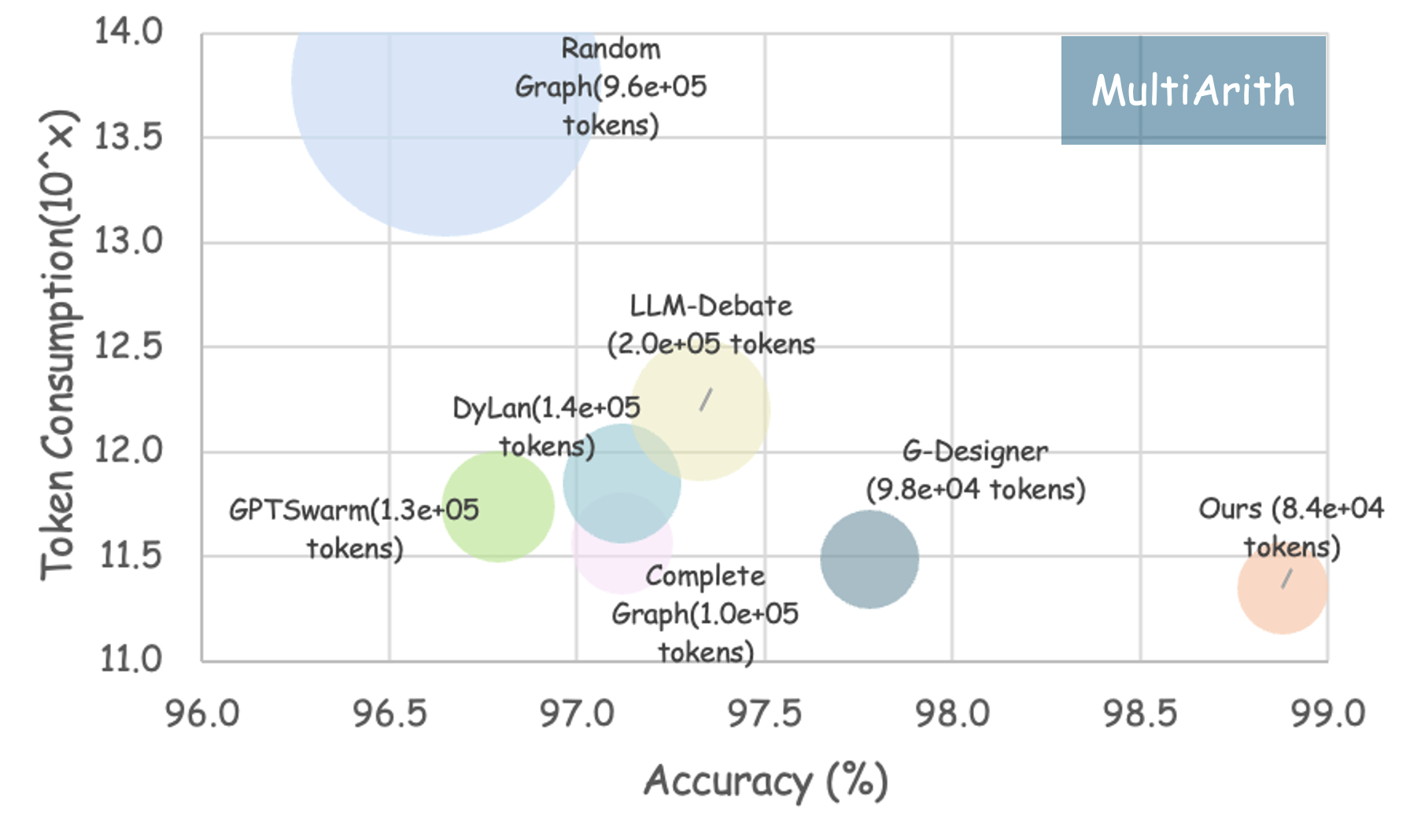}

    \includegraphics[width=0.23\textwidth]{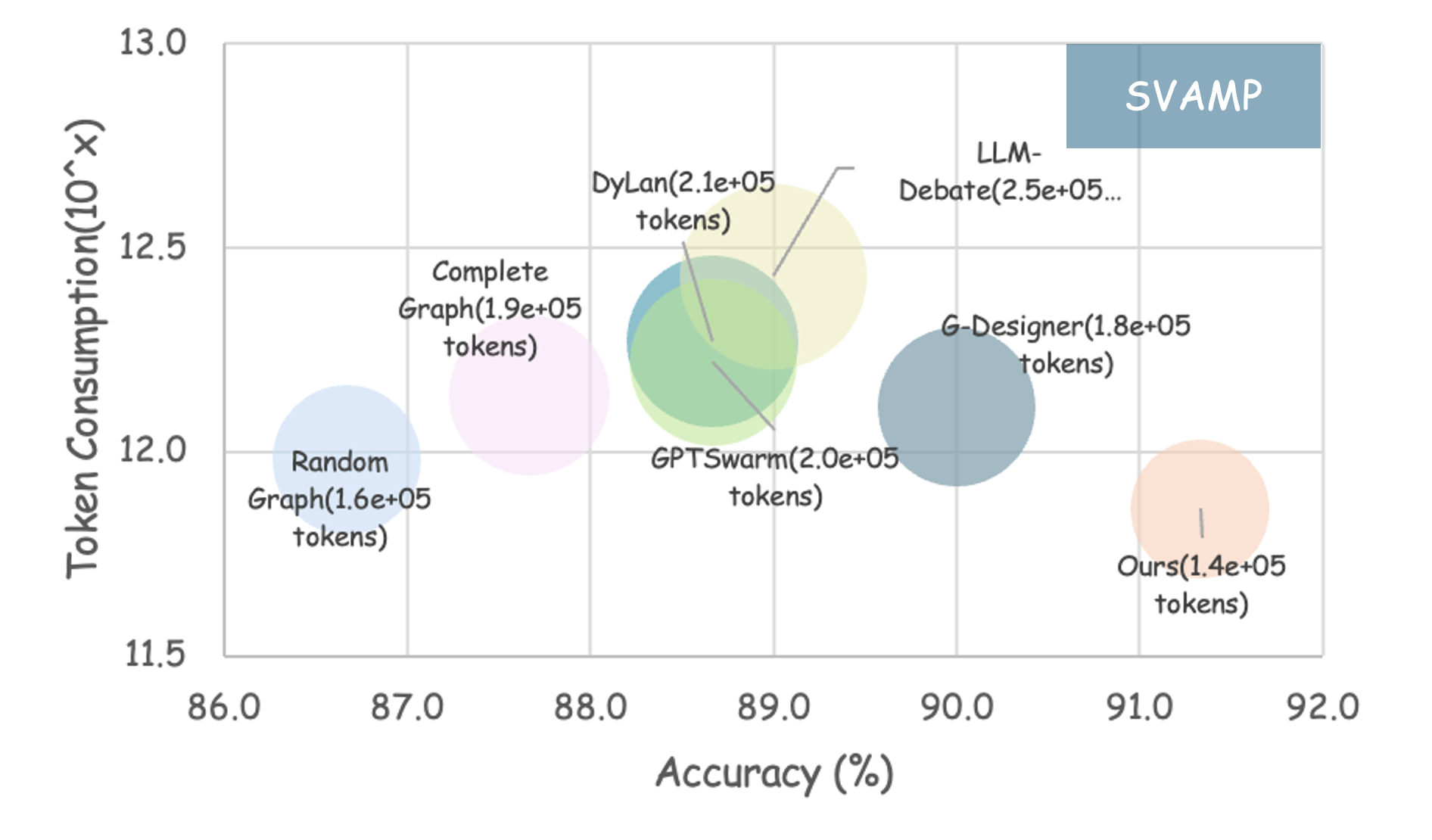}\hfill
    \includegraphics[width=0.23\textwidth]{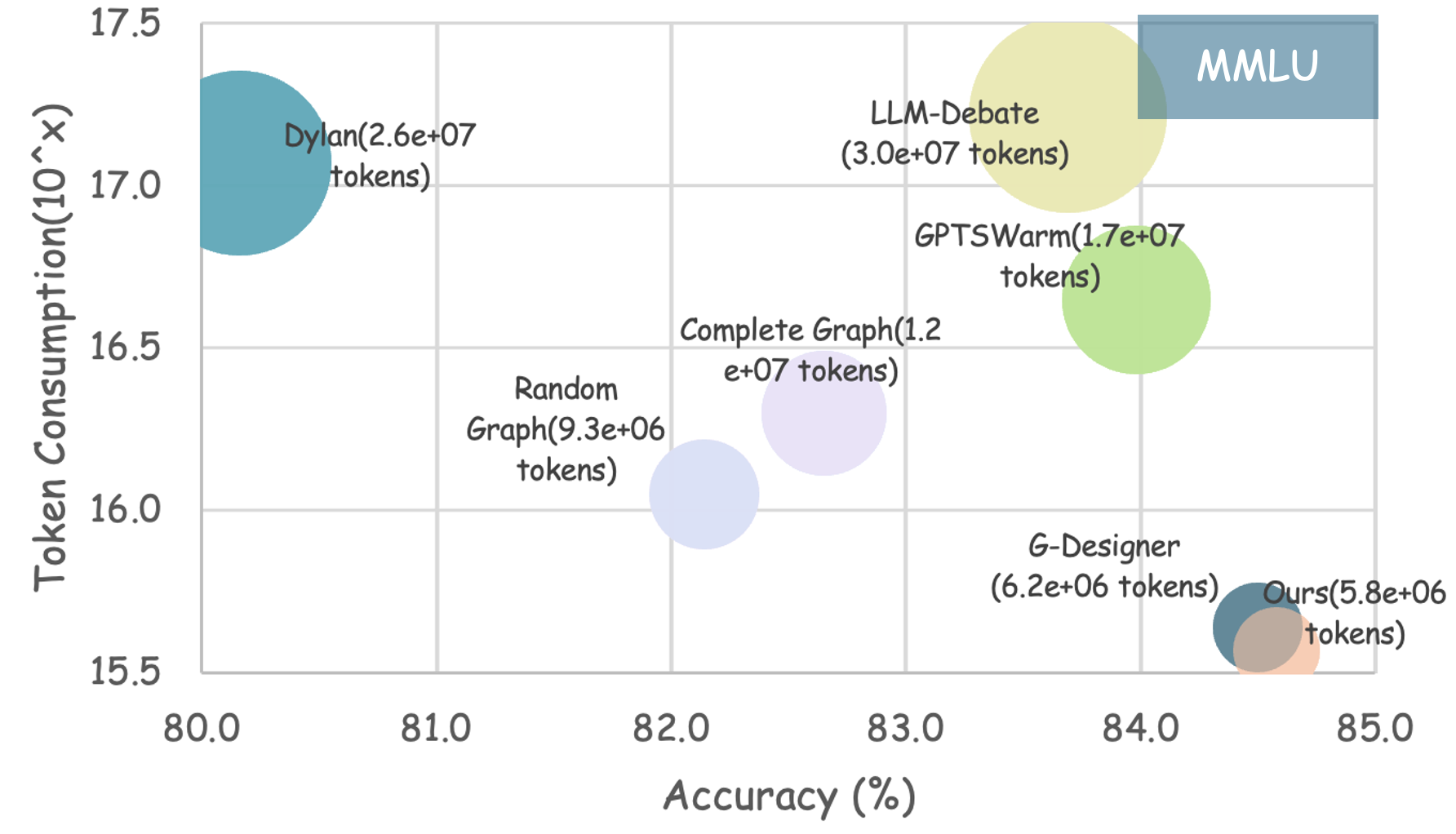}

    \caption{\textbf{Accuracy versus token consumption for various multi-agent methods across the GSM8K, MultiArith, MMLU, and SVAMP benchmarks.} The plots illustrate that topologies generated by GTD are highly cost-efficient, achieving strong performance while using significantly fewer tokens than baseline methods that rely on dense communication graphs.}
    \vspace{-1.5em}
    \label{fig:cost}
\end{figure}

\subsection{Communication Cost-Efficiency}

% Your paragraph text starts here...

A core motivation for dynamic topology generation is to reduce unnecessary communication and minimize token consumption. Our analysis confirms that GTD generates not only effective but also significantly sparser and more cost-efficient topologies compared to methods that rely on dense or fully-connected graphs.

The results, visualized in the scatter plots in Figure \ref{fig:cost}, show GTD's exceptional efficiency. Across all tested benchmarks: GSM8K, MultiArith, SVAMP, and MMLU. GTD consistently occupies the optimal bottom-right position, signifying the highest accuracy achieved with the lowest token consumption. For instance, on GSM8K, GTD achieves over 94\% accuracy while consuming only 4.8e+06 tokens; in contrast, the next best performer, G-Designer, requires 15\% more tokens for lower accuracy, while methods like LLM-Debate use over five times the tokens. This efficiency is even more pronounced on MultiArith, where GTD reaches nearly 99\% accuracy using just 8.4e+04 tokens, setting a new Pareto frontier that no other method approaches. Similarly, on SVAMP, GTD is the only method to surpass 91\% accuracy while keeping token usage at a minimum (1.4e+05 tokens). These findings show that the proxy-guided generation process successfully learns to create sparse, efficient graphs by preserving only the most critical communication links, thereby avoiding the quadratic overhead of fully-connected approaches while still enabling complex, high-performance interactions.  {Crucially, this massive reduction in operational token cost ensures that the one-time setup cost for training the proxy is rapidly amortized, granting GTD a net efficiency advantage over zero-shot baselines immediately upon deployment.}

\vspace{-1.0em}

\subsection{Robustness Against Agent Failures}

The structure of a communication graph critically impacts a multi-agent system's resilience. To evaluate this, we tested the robustness of GTD-generated topologies by simulating agent failures during task execution on the GSM8K benchmark. In the experiment, a non-critical agent was randomly selected and its failure was simulated by making it produce erroneous outputs.

The results in the Figure \ref{fig:attack} above  demonstrate that GTD-generated topologies are significantly more robust to agent failure than those from all other compared methods.

While all systems experienced some performance degradation, GTD's accuracy dropped by a mere 0.3 percentage points (from 94.1\% to 93.8\%), showcasing a remarkably graceful degradation. 
This stands in stark contrast to other methods; for instance, DyLan's accuracy plummeted by nearly 13 points, and even a Complete Graph topology dropped by over 2 points. This experiment confirms that by jointly optimizing for multiple objectives, GTD learns to generate topologies with sufficient redundancy to bypass failed agents, ensuring high resilience in practical, imperfect scenarios.

\section{Ablation Studies}

\subsection{Hyperparameter}

\begin{table}[t]
\centering

\setlength{\tabcolsep}{3pt} 
\begin{tabular}{lcc}
  \toprule
  \rowcolor{gray!20}\textbf{Variant} & \textbf{GSM8K} & \textbf{HumanEval} \\
  \midrule
  \rowcolor{lightpurple}\textbf{GTD (Ours)} & \textbf{94.14} & \textbf{91.43} \\
  \midrule
  \rowcolor{gray!10}-- w/o Guidance & 88.42 & 87.19 \\
  -- w/ Random & 89.65 & 88.32 \\
  \bottomrule
  \vspace{-1.0em}
\end{tabular}
\caption{Ablation study on the impact of the proxy guidance mechanism.}
\vspace{-1.0em}

\label{tab:ablation-proxy}
\end{table}

\begin{figure*}[h]
    \centering
    \includegraphics[width=\textwidth]{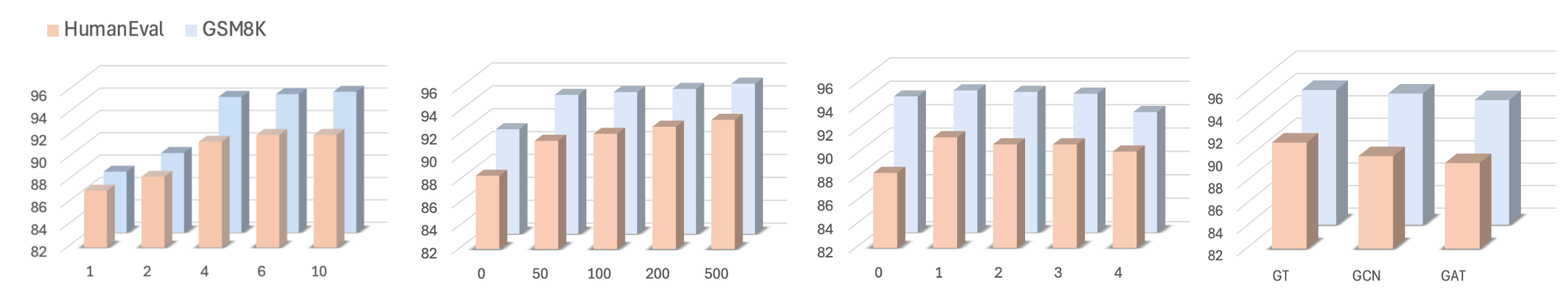} 
    
    % add subtitles for (a)–(d)
    \vspace{0.3em}
    {\scriptsize
    \hspace{0.015\textwidth}
    \begin{minipage}[t]{0.22\textwidth}\centering (a) Number of agents vs. Accuracy\end{minipage}\hspace{0.025\textwidth}
    \begin{minipage}[t]{0.22\textwidth}\centering (b) Number of training samples vs. Accuracy\end{minipage}\hspace{0.04\textwidth}
    \begin{minipage}[t]{0.22\textwidth}\centering (c) Number of diffusion steps vs. Accuracy\end{minipage}\hspace{0.02\textwidth}
    \begin{minipage}[t]{0.22\textwidth}\centering (d) GT vs. GCN vs. GAT\end{minipage}
    }

    \caption{\textbf{Ablation studies on key hyperparameters and components of the GTD framework.} 
    From left to right, the charts show the framework's sensitivity to: 
    \textbf{(1)} the number of agents, \textbf{(2)} the number of training samples, 
    \textbf{(3)} the number of diffusion steps, and \textbf{(4)} the choice of denoising network architecture. 
    The results consistently validate our primary design choices.}
    \label{fig:ablation-charts}
\end{figure*}

To rigorously validate our design choices, we conducted a series of ablation studies to isolate the contribution of GTD's core components and hyperparameters, with results summarized in Figure~\ref{fig:ablation-charts} and Table~\ref{tab:ablation-proxy}. The most critical finding, shown in Table~\ref{tab:ablation-proxy}, confirms the impact of our proxy-guided synthesis; removing the guidance mechanism entirely causes a performance drop of nearly 6 percentage points on GSM8K (from 94.14\% to 88.42\%). Furthermore, using random guidance instead of the proxy model's intelligent selection offered only a minor improvement, proving that the targeted optimization is the key driver of success.

Our analysis of agent team size, visualized in Figure~\ref{fig:ablation-charts} (left), revealed that performance scales effectively up to four agents but shows diminishing returns thereafter. This result validates our use of four agents as an optimal trade-off between task performance and computational efficiency. We also found the framework to be highly data-efficient, with the largest performance gains achieved within the first 50 training samples (Figure~\ref{fig:ablation-charts}, second from left). {This demonstrates that GTD can be trained effectively without requiring a massive, expensive dataset. Furthermore, while current reasoning benchmarks saturate at smaller team sizes, our framework is technically capable of scaling to significantly larger agent populations without hitting memory bottlenecks (see Appendix~\ref{app:scalability}), ensuring its applicability to more complex future scenarios.}

\rebuttal{\subsection{ Generalization to Open-Source Models and Harder Benchmarks} \label{app:generalization}}

\rebuttal{ To verify that our gains are not specific to the GPT-4o-mini backbone, we extended our experiments to the open-source \textbf{Qwen-3-8B} model across three benchmarks (GSM8K, MATH, and MMLU). As shown in Table~\ref{tab:qwen-generalization}, GTD consistently outperforms all baselines, achieving the highest average accuracy of \textbf{72.8\%}. Furthermore, we evaluated GTD on the challenging \textbf{LiveCodeBench} (Pass@1) in Table~\ref{tab:livecodebench}. GTD achieved \textbf{30.8\%}, surpassing the Base Model (25.4\%) and MaAS (29.3\%), demonstrating that our topology optimization provides consistent benefits across different model families and task difficulties.}

\begin{table}[h]
\centering
\renewcommand\arraystretch{1.2}
\resizebox{\columnwidth}{!}{%
\begin{tabular}{@{}lcccc@{}}
\toprule
\rowcolor{gray!25} 
\textbf{Method} & \textbf{GSM8K} & \textbf{MATH} & \textbf{MMLU} & \textbf{Avg.} \\
\midrule
Vanilla (CoT) & 87.8 & 55.0 & 59.6 & 67.5 \\
\rowcolor{gray!10}
SC (CoT $\times$ 5) & 89.1 \perfup{1.3} & 59.2 \perfup{4.2} & 61.4 \perfup{1.8} & 69.9 \perfup{2.4} \\
GPT Swarm & 90.9 \perfup{3.1} & 60.0 \perfup{5.0} & 63.5 \perfup{3.9} & 71.5 \perfup{4.0} \\
\rowcolor{gray!10}
MaAS & 91.2 \perfup{3.4} & 60.4 \perfup{5.4} & 63.9 \perfup{4.3} & 71.8 \perfup{4.3} \\
G-Designer & 91.5 \perfup{3.7} & 60.2 \perfup{5.2} & 64.6 \perfup{5.0} & 72.1 \perfup{4.6} \\
\midrule
\rowcolor{lightpurple}  
\textbf{GTD (Ours)} & \textbf{91.7} \perfup{3.9} & \textbf{61.5} \perfup{6.5} & \textbf{65.2} \perfup{5.6} & \textbf{72.8} \perfup{5.3} \\
\bottomrule
\end{tabular}%
}
\caption{Performance comparison on GSM8K, MATH, and MMLU benchmarks using the \textbf{Qwen-3-8B} backbone. All scores are accuracy (\%). Changes are reported relative to the Vanilla (CoT) baseline. The \textbf{best result} in each column is bolded.}
\label{tab:qwen-generalization}
\end{table}

\section{Conclusion}

Current MAS struggle with inefficient static topologies. In this paper, we introduce Guided Topology Diffusion (GTD), utilizing conditional graph diffusion to construct adaptive networks. GTD produces sparse, robust topologies that outperform existing methods. While currently requiring seed data, future work targets online active learning and dynamic, time-varying topology evolution.

\newpage

\section*{Acknowledgments}
Y.~W. is partially supported by NSF DMS-2415226, DARPA W912CG25CA007 and research gift funds from Amazon and Qualcomm. K.W.~C. is partially supported by ONR grant N00014-23-1-2780, DARPA ANSR program FA8750- 23-2-0004, OptumLabs, Amazon, and Apple.

%Existing Multi-Agent Systems (MAS) often rely on static, hand-crafted topologies that do not adapt to diverse tasks, leading to either excessive token consumption for simple problems or performance bottlenecks for complex ones. To address this, we introduce Guided Topology Diffusion (GTD), a novel generative framework that uses conditional discrete graph diffusion models to iteratively construct a communication network. Experiments show that GTD creates highly task-adaptive, sparse, and efficient topologies that significantly outperform existing methods in LLM agent collaboration and demonstrate superior robustness to agent failures. {However, a limitation remains in the dependency on the initial seed dataset for training the proxy, which, despite being small, requires domain-specific simulation data. As for future work, we will explore online active learning mechanisms to update the proxy in real-time, eliminating the offline warm-up phase entirely. Additionally, we plan to extend GTD to support dynamic, time-varying topologies that evolve continuously throughout the multi-agent conversation, rather than being fixed at the start.}

% Bibliography entries for the entire Anthology, followed by custom entries
%\bibliography{custom,anthology-overleaf-1,anthology-overleaf-2}

% Custom bibliography entries only
\bibliography{custom}

@article{openai2023gpt4,
  title={GPT-4 Technical Report},
  author={{OpenAI}},
  journal={arXiv preprint arXiv:2303.08774},
  year={2023}
}

@article{zhang2025gdesigner,
  title        = {G-Designer: Architecting Multi-agent Communication Topologies via Graph Neural Networks},
  author       = {Zhang, Guibin and Yue, Yanwei and Sun, Xiangguo and Wan, Guancheng and Yu, Miao and Fang, Junfeng and Wang, Kun and Chen, Tianlong and Cheng, Dawei},
  journal      = {arXiv preprint arXiv:2410.11782},
  year         = {2025},
  doi          = {10.48550/arXiv.2410.11782},
  url          = {https://arxiv.org/abs/2410.11782}
}

@article{yang2025anymac,
  title        = {AnyMAC: Cascading Flexible Multi-Agent Collaboration via Next-Agent Prediction},
  author       = {Wang, Song and Tan, Zhen and Chen, Zihan and Zhou, Shuang and Chen, Tianlong and Li, Jundong},
  journal      = {arXiv preprint arXiv:2506.17784},
  year         = {2025},
  url          = {https://arxiv.org/abs/2506.17784}
}

@article{sun2025assemble,
  title        = {Assemble Your Crew: Automatic Multi-agent Communication Topology Design via Autoregressive Graph Generation},
  author       = {Li, Shiyuan and Liu, Yixin and Wen, Qingsong and Zhang, Chengqi and Pan, Shirui},
  journal      = {arXiv preprint arXiv:2507.18224},
  year         = {2025},
  url          = {https://arxiv.org/abs/2507.18224}
}

@article{zhou2025mad,
  title        = {Multi-Agent Design: Optimizing Agents with Better Prompts and Topologies},
  author       = {Zhou, Han and Wan, Xingchen and Sun, Ruoxi and Palangi, Hamid and Iqbal, Shariq and Vuli{\'c}, Ivan and Korhonen, Anna and Ar{\i}k, Sercan {\"O}.},
  journal      = {arXiv preprint arXiv:2502.02533},
  year         = {2025},
  url          = {https://arxiv.org/abs/2502.02533}
}

@article{zhang2024cutthecrap,
  title        = {Cut the Crap: An Economical Communication Pipeline for LLM-based Multi-Agent Systems},
  author       = {Zhang, Guibin and Yue, Yanwei and Li, Zhixun and Yun, Sukwon and Wan, Guancheng and Wang, Kun and Cheng, Dawei and Yu, Jeffrey Xu and Chen, Tianlong},
  journal      = {arXiv preprint arXiv:2410.02506},
  year         = {2024},
  doi          = {10.48550/arXiv.2410.02506},
  url          = {https://arxiv.org/abs/2410.02506},
  note         = {ICLR 2025 (poster), OpenReview version available}
}

@article{zhang2025adg,
  title        = {An Evolutionary Algorithm for Multi-Objective Workflow Scheduling with Adaptive Dynamic Grouping},
  author       = {Zhang, Guochen and Zhang, Aolong and Sun, Chaoli and Ye, Qing},
  journal      = {Electronics},
  year         = {2025},
  volume       = {14},
  number       = {13},
  pages        = {2586},
  doi          = {10.3390/electronics14132586},
  url          = {https://www.mdpi.com/2079-9292/14/13/2586}
}

@misc{structuralMA2023,
  title        = {Structural topology optimisation based on a multi-agent model},
  howpublished = {\url{https://www.sciencedirect.com/science/article/pii/S0141029623013937}},
  note         = {Engineering Structures, Elsevier. Accessed 2025-08-26},
  year         = {2023}
}

@article{zhu2006topologies,
  title        = {Topologies of agents interactions in knowledge intensive multi-agent systems for networked information services},
  author       = {Zhu, Qiuming},
  journal      = {Advanced Engineering Informatics},
  year         = {2006},
  volume       = {20},
  number       = {1},
  pages        = {31--45},
  doi          = {10.1016/j.aei.2005.08.001},
  url          = {https://www.sciencedirect.com/science/article/pii/S1474034605000728}
}

@article{chen2013dynamical,
  title        = {Multi-Agent Systems with Dynamical Topologies: Consensus and Applications},
  author       = {Chen, Yao and L{\"u}, Jinhu and Yu, Xinghuo and Hill, David J.},
  journal      = {IEEE Circuits and Systems Magazine},
  year         = {2013},
  volume       = {13},
  number       = {3},
  pages        = {21--34},
  doi          = {10.1109/MCAS.2013.2271443}
}

@inproceedings{helsinger2004cougaar,
  title        = {Cougaar: A Scalable, Distributed Multi-Agent Architecture},
  author       = {Helsinger, Aaron and Thome, Michael and Wright, Todd},
  booktitle    = {Proceedings of the IEEE International Conference on Systems, Man and Cybernetics (SMC)},
  year         = {2004},
  pages        = {1910--1917},
  address      = {The Hague, Netherlands}
}

@article{chowdhury2017fast,
  title        = {Fast Consensus in Multi-Agent Systems With Star Topology Using High Gain Observers},
  author       = {Chowdhury, Dhrubajit and Khalil, Hassan K.},
  journal      = {IEEE Control Systems Letters},
  year         = {2017},
  volume       = {1},
  number       = {1},
  pages        = {188--193}
}

@article{xiao2013cooperative,
  title        = {Cooperative reinforcement learning in topology-based multi-agent systems},
  author       = {Xiao, Dan and Tan, Ah-Hwee},
  journal      = {Autonomous Agents and Multi-Agent Systems},
  year         = {2013},
  volume       = {26},
  number       = {1},
  pages        = {86--119},
  doi          = {10.1007/s10458-011-9183-4}
}

@inproceedings{zhu2003cooperation,
  title        = {The Topologies of Cooperation in Knowledge Intensive Multi-Agent Systems},
  author       = {Zhu, Qiuming},
  booktitle    = {Proceedings of the IEEE International Conference on Systems, Man and Cybernetics (SMC)},
  year         = {2003},
  note         = {IEEE Xplore abstract \#1245130},
  url          = {https://ieeexplore.ieee.org/abstract/document/1245130}
}

@mastersthesis{ayala2025topology,
  title        = {Topology-Driven Performance Analyses in Consensus Algorithms for Multi-Agent Systems},
  author       = {Ayal
                  a, Brandon},
  school       = {University of Texas at Arlington},
  year         = {2025},
  address      = {Arlington, TX},
  url          = {https://mavmatrix.uta.edu/mechaerospace_theses/1030/}
}

@misc{gong2015nkstar,
  title        = {Fault Tolerant (n, k)-Star Power Network Topology for Multi-Agent Communication in Automated Power Distribution Systems},
  author       = {Gong, Ning and Korostelev, Michael and Ren, Qiangguo and Bai, Li and Biswas, Saroj and Ferrese, Frank},
  year         = {2015},
  howpublished = {\url{https://d1wqtxts1xzle7.cloudfront.net/80670305/pdf-libre.pdf}},
  note         = {Proceedings/Journal venue not clearly specified; accessed 2025-08-26}
}

@article{wei2022cot,
  title={Chain-of-Thought Prompting Elicits Reasoning in Large Language Models},
  author={Wei, Jason and others},
  journal={arXiv:2201.11903},
  year={2022},
  url={https://arxiv.org/abs/2201.11903}
}

@article{fu2022complexcot,
  title={Complexity-Based Prompting for Multi-Step Reasoning},
  author={Fu, Yao and others},
  journal={arXiv:2210.00720},
  year={2022},
  url={https://arxiv.org/abs/2210.00720}
}

@inproceedings{wang2023selfconsistency,
  title={Self-Consistency Improves Chain of Thought Reasoning in Language Models},
  author={Wang, Xuezhi and others},
  booktitle={ICLR},
  year={2023},
  url={https://arxiv.org/abs/2203.11171}
}

@article{wang2023spp,
  title={Unleashing the Emergent Cognitive Synergy of LLMs: A Multi-Persona Self-Collaboration Framework},
  author={Wang, Zhenhailong and others},
  journal={arXiv:2307.05300},
  year={2023},
  url={https://arxiv.org/abs/2307.05300}
}

@article{du2023llmdebate,
  title={Improving Factuality and Reasoning in Language Models through Multiagent Debate},
  author={Du, Yilun and others},
  journal={arXiv:2305.14325},
  year={2023},
  url={https://arxiv.org/abs/2305.14325}
}

@inproceedings{jiang2023llmblender,
  title={{LLM-Blender}: Ensembling Large Language Models with Pairwise Ranking and Generative Fusion},
  author={Jiang, Dongfu and Lin, Bill Yuchen and Ren, Xiang},
  booktitle={ACL},
  year={2023},
  url={https://aclanthology.org/2023.acl-long.792/}
}

@article{liu2023dylan,
  title={A Dynamic LLM-Powered Agent Network for Task-Oriented Collaboration},
  author={Liu, Zhen and others},
  journal={arXiv:2310.02170},
  year={2023},
  url={https://arxiv.org/abs/2310.02170}
}

@article{chen2023agentverse,
  title={AgentVerse: Facilitating Multi-Agent Collaboration and Exploring Emergent Behaviors},
  author={Chen, Wenhu and others},
  journal={arXiv:2308.10848},
  year={2023},
  url={https://arxiv.org/abs/2308.10848}
}

@article{qian2024macnet,
  title={Scaling Large-Language-Model-based Multi-Agent Collaboration},
  author={Qian, Chen and others},
  journal={arXiv:2406.07155},
  year={2024},
  url={https://arxiv.org/abs/2406.07155}
}

@article{chen2023autoagents,
  title={AutoAgents: A Framework for Automatic Agent Generation},
  author={Chen, Guangyao and others},
  journal={arXiv:2309.17288},
  year={2023},
  url={https://arxiv.org/abs/2309.17288}
}

@article{zhuge2024gptswarm,
  title={Language Agents as Optimizable Graphs},
  author={Zhuge, Mingchen and others},
  journal={arXiv:2402.16823},
  year={2024},
  url={https://arxiv.org/abs/2402.16823}
}

@article{hu2024adas,
  title={Automated Design of Agentic Systems},
  author={Hu, Shunyu and others},
  journal={arXiv:2408.08435},
  year={2024},
  url={https://arxiv.org/abs/2408.08435}
}

@article{shang2024agentsquare,
  title={AgentSquare: Automatic LLM Agent Search in Modular Design Space},
  author={Shang, Yu and others},
  journal={arXiv:2410.06153},
  year={2024},
  url={https://arxiv.org/abs/2410.06153}
}

@inproceedings{zhang2025aflow,
  title={AFlow: Automating Agentic Workflow Generation},
  author={Zhang, Jiayi and others},
  booktitle={ICLR},
  year={2025},
  url={https://arxiv.org/abs/2410.10762}
}

@article{zhang2025maas,
  title={Multi-agent Architecture Search via Agentic Supernet},
  author={Zhang, Guibin and others},
  journal={arXiv:2502.04180},
  year={2025},
  url={https://arxiv.org/abs/2502.04180}
}

@inproceedings{ho2020ddpm,
  title={Denoising Diffusion Probabilistic Models},
  author={Ho, Jonathan and Jain, Ajay and Abbeel, Pieter},
  booktitle={NeurIPS},
  year={2020}
}

@inproceedings{song2021score,
  title={Score-Based Generative Modeling through Stochastic Differential Equations},
  author={Song, Yang and Sohl-Dickstein, Jascha and Kingma, Diederik and Kumar, Abhishek and Ermon, Stefano and Poole, Ben},
  booktitle={ICLR},
  year={2021}
}

@misc{ho2021classifierfree,
  title={Classifier-Free Diffusion Guidance},
  author={Ho, Jonathan and Salimans, Tim},
  year={2021},
  howpublished={arXiv:2207.12598}
}

@inproceedings{vignac2023digress,
  title={{DiGress}: A Generative Model for Graphs via Diffusion},
  author={Vignac, Clement and others},
  booktitle={NeurIPS},
  year={2023}
}

@article{nesterov2017random,
  title={Random Gradient-Free Minimization of Convex Functions},
  author={Nesterov, Yurii and Spokoiny, Vladimir},
  journal={Foundations of Computational Mathematics},
  year={2017}
}

@inproceedings{liu2018zeroth,
  title={Zeroth-Order Stochastic Projected Gradient Descent for Nonconvex Optimization},
  author={Liu, Sijia and others},
  booktitle={NeurIPS},
  year={2018}
}

@article{ouyang2022instructgpt,
  title={Training Language Models to Follow Instructions with Human Feedback},
  author={Ouyang, Long and others},
  journal={arXiv:2203.02155},
  year={2022}
}

@article{wu2023autogen,
  title={{AutoGen}: Enabling Next-Gen {LLM} Applications via Multi-Agent Conversation},
  author={Wu, Qingyun and others},
  journal={arXiv:2308.08155},
  year={2023}
}

@article{hong2023metagpt,
  title={{MetaGPT}: Meta Programming for Multi-Agent Collaborative Framework},
  author={Hong, Sirui and others},
  journal={arXiv:2308.00352},
  year={2023}
}

@article{li2023camel,
  title={{CAMEL}: Communicative Agents for “Mind” Exploration with Language Models},
  author={Li, Guohao and others},
  journal={arXiv:2303.17760},
  year={2023}
}

@inproceedings{li2025expocomm,
  title={Exponential Topology-enabled Scalable Communication in Multi-agent Reinforcement Learning},
  author={Xinran Li and Xiaolu Wang and Chenjia Bai and Jun Zhang},
  booktitle={ICLR},
  year={2025},
  url={https://arxiv.org/abs/2502.19717},
  journal={arXiv: 2502.19717}
}

@inproceedings{zhang2023dacom,
  title={DACOM: Learning Delay-Aware Communication for Multi-Agent Reinforcement Learning},
  author={Tingting Yuan and Hwei-Ming Chung and Jie Yuan and Xiaoming Fu},
  booktitle={AAAI},
  year={2023},
  url={https://arxiv.org/abs/2212.01619},
  journal={arXiv: 2212.01619}
}

@misc{zhu2025reducing,
  title={Reducing Variance Caused by Communication in Decentralized Multi-agent Deep Reinforcement Learning},
  author={Changxi Zhu and Mehdi Dastani and Shihan Wang},
  year={2025},
  journal={arXiv: 2502.06261},
  url={https://arxiv.org/abs/2502.06261},
}

@misc{xu2024disco,
  title ={Discrete-state Continuous-time Diffusion for Graph Generation},
  author={Zhe Xu and Ruizhong Qiu and Yuzhong Chen and Huiyuan Chen and Xiran Fan and Menghai Pan and Zhichen Zeng and Mahashweta Das and Hanghang Tong},
  year={2024},
  journal= {arXiv: 2405.11416},
  url={https://arxiv.org/abs/2405.11416}
}

@inproceedings{madeira2024construct,
  title  = {Generative Modelling of Structurally Constrained Graphs},
  author= {Manuel Madeira and Clement Vignac and Dorina Thanou and Pascal Frossard},
  booktitle={NeurIPS},
  year={2024},
  journal={arXiv: 2406.17341},
  url={https://arxiv.org/abs/2406.17341}
}

@inproceedings{you2018gcpn,
  title={Graph Convolutional Policy Network for Goal-Directed Molecular Graph Generation},
  author={You, Jiaxuan and Liu, Bowen and Ying, Rex and Pande, Vijay and Leskovec, Jure},
  booktitle={NeurIPS},
  year={2018},
  url={https://doi.org/10.48550/arXiv.1806.02473},
  journal={arXiv: 1806.02473}
}

@article{lo2024contrastivecomm,
  title={Learning Multi-Agent Communication with Contrastive Learning},
  author={Lo, Yat Long and Sengupta, Biswa and Foerster, Jakob and Noukhovitch, Michael},
  journal ={arXiv:2307.01403},
  year={2024},
  url={https://arxiv.org/abs/2307.01403}
}

@inproceedings{du2024ideal,
  title={Expressive Multi-Agent Communication via Identity-Aware Learning},
  author={Du, Wei and Ding, Shifei and Guo, Lili and Zhang, Jian and Ding, Ling},
  booktitle={AAAI},
  volume={38},
  number={16},
  pages={17354--17361},
  year={2024},
  doi={10.1609/aaai.v38i16.29683},
  url={https://doi.org/10.1609/aaai.v38i16.29683}
}

@inproceedings{ding2024magi,
  title={Learning Efficient and Robust Multi-Agent Communication via Graph Information Bottleneck},
  author={Ding, Shifei and Du, Wei and Ding, Ling and Guo, Lili and Zhang, Jian},
  booktitle={AAAI},
  volume={38},
  number={16},
  pages={17346--17353},
  year={2024},
  doi={10.1609/aaai.v38i16.29682},
  url={https://doi.org/10.1609/aaai.v38i16.29682}
}

@inproceedings{hu2024commformer,
  title={Learning Multi-Agent Communication from Graph Modeling Perspective},
  author={Hu, Shengchao and Shen, Li and Zhang, Ya and Tao, Dacheng},
  booktitle={ICLR},
  year={2024},
  url={https://doi.org/10.48550/arXiv.2405.08550},
  journal={arXiv: 2405.08550}
}

@inproceedings{zhao2024pard,
  title={PARD: Permutation-Invariant Autoregressive Diffusion for Graph Generation},
  author= {Zhao, Lingxiao and Ding, Xueying and Akoglu, Leman},
  booktitle={NeurIPS},
  year={2024},
  url={https://doi.org/10.48550/arXiv.2402.03687},
  journal={arXiv: 2402.03687}
}

@article{ji2025cora,
  title={CORA: Coalitional Rational Advantage Decomposition for Multi-Agent Policy Gradients},
  author={Ji, Mengda and Xu, Genjiu and Wang, Liying},
  journal={arXiv:2506.04265},
  year={2025},
  url={https://doi.org/10.48550/arXiv.2506.04265}
}

@inproceedings{velickovic2018gat,
  title={Graph Attention Networks},
  author={Veli{\v{c}}kovi{'\''c}, Petar and Cucurull, Guillem and Casanova, Arantxa and Romero, Adriana and Li{\`o}, Pietro and Bengio, Yoshua},
  booktitle={International Conference on Learning Representations},
  year={2018}
}

@inproceedings{kipf2017semi,
  title={Semi-Supervised Classification with Graph Convolutional Networks},
  author={Kipf, Thomas N and Welling, Max},
  booktitle={International Conference on Learning Representations},
  year={2017}
}

@inproceedings{yun2019graph,
  title={Graph Transformer Networks},
  author={Yun, Seongjun and Jeong, Minbyul and Kim, Raehyun and Kang, Jaewoo and Kim, Hyunwoo J},
  booktitle={Advances in Neural Information Processing Systems},
  year={2019}
}

@inproceedings{dwivedi2021generalization,
  title={A Generalization of Transformer Networks to Graphs},
  author={Dwivedi, Vijay Prakash and Bresson, Xavier},
  booktitle={AAAI Workshop on Deep Learning on Graphs},
  year={2021}
}

@inproceedings{dhariwal2021diffusion,
  title={Diffusion Models Beat {GAN}s on Image Synthesis},
  author={Dhariwal, Prafulla and Nichol, Alexander},
  booktitle={Advances in Neural Information Processing Systems},
  year={2021}
}

\appendix

\newpage

\section{Limitations}
\label{sec:limitations}

While Guided Topology Diffusion (GTD) demonstrates significant improvements in multi-agent coordination, we identify several limitations in the current framework. First, the method relies on a pre-computed dataset of baseline topologies to train the surrogate reward model. Although our experiments indicate high data efficiency, requiring only a small number of samples, the initial collection of this seed data incurs a setup cost. Second, the current generation process is static; the topology is fixed before the task begins and does not currently support dynamic, time-varying evolution during the conversation if task requirements shift unexpectedly. Finally, our ablation studies on agent team size suggest that performance gains diminish after approximately four agents for current reasoning benchmarks. While the framework is technically capable of scaling to larger swarms, the utility of massive agent populations for standard benchmarks remains limited by the intrinsic complexity of the tasks themselves.

\section{Ethics and Societal Impact}
\label{sec:ethics}
% Checklist E: AI Assistants In Research Or Writing
% Checklist E1: Information About Use Of Ai Assistants
This research focuses on improving the efficiency of Multi-Agent Systems (MAS), which can accelerate scientific discovery and reduce energy consumption. However, we acknowledge that optimizing agent coordination is a dual-use technology. In the wrong hands, such frameworks could potentially be used to orchestrate malicious activities, such as coordinating disinformation campaigns or automated attacks. Furthermore, the performance of our system depends on the initial dataset used to train the models; biases in this data could lead to suboptimal or inequitable communication structures. 

Regarding the use of AI assistants in this work, we state that Large Language Models (specifically GPT-4o-mini and Qwen-3-8B) served as the backbone for the agents in our experiments. Additionally, LLMs were utilized strictly for polishing the text and generating figures; all underlying research, theoretical foundations, and experimental designs were completed entirely by the authors.

\section{Computational Experiments}
\label{sec:compute}
% Checklist C: Computational Experiments
To verify the resource efficiency of our approach, we conducted a scalability analysis by measuring GPU memory consumption as the number of agents increased. The system requires 2.8 GB of memory for 5 agents and scales linearly to 4.9 GB for 1000 agents. This confirms that the GTD framework is technically capable of optimizing large-scale agent organizations without hitting hardware bottlenecks on standard consumer hardware. All experiments, including model training and multi-agent simulations, were conducted on a server equipped with four NVIDIA A6000 GPUs, each with 48GB of VRAM.

\section{Detailed Experimental Setup}
\label{sec:setup}
% Checklist C2: Experimental Setup And Hyperparameters
% Checklist C3: Descriptive Statistics
Our experimental framework was standardized across all evaluations. The backbone for all agents was GPT-4o-mini. We deployed domain-specific teams: four \textit{MathSolver} agents for mathematics datasets (GSM8K, MATH, MultiArith, SVAMP), four \textit{CodeSolver} agents for the coding dataset (HumanEval), and three \textit{KnowledgeableAcademic} agents for the science dataset (MMLU).

The surrogate reward model ($\mathcal{P}_{\phi}$) is a Graph Neural Network with two Graph Attention (GAT) layers and a hidden dimension of 32. It was trained for 10 epochs using the Adam optimizer with a learning rate of $1\mathrm{e}{-3}$ and a batch size of 16 to minimize mean squared error loss. The conditional diffusion generator ($\mathcal{G}_{\theta}$) is a two-layer Graph Transformer with two attention heads, optimized with a learning rate of $1\mathrm{e}{-4}$ over 50 diffusion timesteps.

During inference, proxy-guided synthesis applies a zeroth-order optimization step, evaluating five candidate graphs ($K=5$) at each timestep to guide the generation process using an inference batch size of 2.The training dataset for the proxy model was constructed using a minimal subset of only 50 samples from the training set, demonstrating high data efficiency.

\section{Algorithm}

See Algorithm 1.

\begin{algorithm*}
\caption{Guided Topology Diffusion (GTD) Generation}
\label{alg:gtd}
\begin{algorithmic}[1]
\STATE \textbf{Input:} Task condition \(C_{\text{new}}\), trained models \(\mathcal{G}_{\theta^*}\), \(\mathcal{P}_{\phi^*}\), weights \(w_u, w_c\).
\STATE Sample \(A_T \sim \mathcal{N}(0, \mathbf{I})\).
\FOR{\(t = T, \dots, 1\)}
    \STATE Predict the unguided clean graph: \(\hat{A}_0^{(t)} = \mathcal{G}_{\theta^*}(A_t, C_{\text{new}}, t)\).
    \STATE Generate \(K\) candidates: \( \{A_{0,k}^{(t)}\}_{k=1}^K \), where \(A_{0,k}^{(t)} \sim \text{Bernoulli}(\text{sigmoid}(\hat{A}_0^{(t)}))\).
    \STATE Evaluate candidates: For \(k=1 \dots K\), compute \([\hat{u}_k, \hat{c}_k]^T = \mathcal{P}_{\phi^*}(A_{0,k}^{(t)}, C_{\text{new}})\).
    \STATE Select best candidate via ZO: \(A_{0, \text{best}}^{(t)} = \arg\max_{A_{0,k}^{(t)}} (w_u \cdot \hat{u}_k - w_c \cdot \hat{c}_k)\).
    \STATE Compute posterior mean \(\boldsymbol{\mu}_{\text{post}}\) and variance \(\boldsymbol{\Sigma}_{\text{post}}\) for \(q(A_{t-1}|A_t, A_{0, \text{best}}^{(t)})\).
    \STATE Sample the next state: \(A_{t-1} \sim \mathcal{N}(\boldsymbol{\mu}_{\text{post}}, \boldsymbol{\Sigma}_{\text{post}})\).
\ENDFOR
\STATE \textbf{Output:} The final graph \(A_0\).
\end{algorithmic}
\end{algorithm*}

\section{Data Statistics}
We conclude the data statistics in the table~\ref{tab:datasets}.

\begin{table*}[h]
  \centering
  \caption{Dataset descriptions and statistics.}
  \label{tab:datasets}
  \begin{tabular}{@{}llllrl@{}}
    \toprule
    Category & Dataset & Answer Type & Metric & \#Test & License \\
    \midrule
    General reasoning & MMLU       & Multi-choice & Acc.   & 1,530     & MIT License \\
    \addlinespace[0.3em]
    \multirow{4}{*}{Math reasoning}
      & GSM8K      & Number       & Acc.   & 1{,}319 & MIT License \\
      & MultiArith & Number       & Acc.   & 180     & Unspecified \\
      & SVAMP      & Number       & Acc.   & 300     & MIT License \\
      & Math       & Number       & Acc.   & 500     & MIT License \\
    \addlinespace[0.2em]
    Code generation & HumanEval   & Code         & Pass@1 & 164     & MIT License \\
    \bottomrule
  \end{tabular}
\end{table*}

\begin{table*}[!t]
\centering
\renewcommand\arraystretch{1.3} % Slightly more padding for readability

\resizebox{\textwidth}{!}{%
\begin{tabular}{@{}ll@{}}
\toprule
\rowcolor{gray!25} 
\textbf{Benchmark} & \textbf{Dataset URL} \\
\midrule
GSM8K & \url{https://huggingface.co/datasets/openai/gsm8k/viewer/main/test?views\%5B\%5D=main_test} \\
\rowcolor{gray!10}
MATH & \url{https://huggingface.co/datasets/HuggingFaceH4/MATH-500} \\
MMLU & \url{https://huggingface.co/datasets/cais/mmlu/viewer/all/dev?row=5&views\%5B\%5D=all_dev} \\
\rowcolor{gray!10}
HumanEval & \url{https://huggingface.co/datasets/openai/openai_humaneval} \\
MultiArith & \url{https://huggingface.co/datasets/ChilleD/MultiArith/viewer/default/test?views\%5B\%5D=test} \\
\rowcolor{gray!10}
SVAMP & \url{https://huggingface.co/datasets/ChilleD/SVAMP/viewer/default/test?views\%5B\%5D=test} \\
\bottomrule
\end{tabular}%
}

\caption{Overview of publicly available benchmarks used in the experiments.}
\label{tab:benchmark-sources}
\end{table*}

\section{Theoretical Justification}
\label{sec:theoretical_justification}

In this section, we provide a more formal theoretical underpinning for the GTD framework. We begin by framing the graph diffusion model within the lens of variational inference and then analyze the convergence properties of our proxy-guided synthesis process.

\subsection{Variational Perspective of Graph Diffusion}
The generative process of denoising diffusion models can be rigorously justified as a procedure for optimizing the Evidence Lower Bound (ELBO) of the data's log-likelihood.

\begin{definition}[Evidence Lower Bound (ELBO)]
Given a data point $A_0$, a joint distribution $p_\theta(A_{0:T}|C)$, and a variational posterior $q(A_{1:T}|A_0)$, the ELBO for the conditional log-likelihood $\log p_\theta(A_0|C)$ is defined as:
\begin{align}
\mathcal{L}_{\text{ELBO}}
&= \mathbb{E}_{q(A_{1:T}\mid A_0)}
\left[ \log \frac{p_\theta(A_{0:T}\mid C)}{q(A_{1:T}\mid A_0)} \right]
\notag\\
&\le \log p_\theta(A_0\mid C)
\label{eq:elbo}
\end{align}

\end{definition}

This lower bound can be decomposed into a series of terms that are more amenable to optimization:
\begin{align}
\mathcal{L}_{\text{ELBO}}
&= \mathbb{E}_q \left[ \log p_\theta(A_0\mid A_1, C) \right]
\notag\\
&\quad - D_{KL}(q(A_T\mid A_0)\,\|\,p(A_T))
\notag\\
&\quad - \sum_{t=2}^T D_{KL}\!\left(q(A_{t-1}\mid A_t, A_0)\,\| \right)
\notag\\
&\quad\quad p_\theta(A_{t-1}\mid A_t, C))
\label{eq:elbo_kl}
\end{align}

Optimizing the ELBO involves minimizing the KL-divergence between the true posterior of the forward process and the learned reverse process. The forward process posterior is known to be tractable.

\begin{lemma}[Forward Process Posterior]
The posterior distribution $q(A_{t-1}|A_t, A_0)$ is a Gaussian distribution given by:
\begin{equation}
q(A_{t-1}|A_t, A_0) = \mathcal{N}\left(A_{t-1}; \tilde{\boldsymbol{\mu}}_t(A_t, A_0), \tilde{\beta}_t \mathbf{I}\right)
\end{equation}
where $\tilde{\boldsymbol{\mu}}_t(A_t, A_0) = \frac{\sqrt{\bar{\alpha}_{t-1}}\beta_t}{1-\bar{\alpha}_t}A_0 + \frac{\sqrt{\alpha_t}(1-\bar{\alpha}_{t-1})}{1-\bar{\alpha}_t}A_t$ and $\tilde{\beta}_t = \frac{1-\bar{\alpha}_{t-1}}{1-\bar{\alpha}_t}\beta_t$.
\end{lemma}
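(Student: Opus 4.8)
The plan is to derive the posterior by a direct application of Bayes' rule and then recognize the resulting expression as a Gaussian through completing the square; this is the standard DDPM posterior computation adapted to our notation. First I would invoke the Markov structure of the forward process to write
\begin{equation}
q(A_{t-1}\mid A_t, A_0) = \frac{q(A_t \mid A_{t-1})\, q(A_{t-1}\mid A_0)}{q(A_t \mid A_0)},
\end{equation}
where the key simplification $q(A_t\mid A_{t-1},A_0)=q(A_t\mid A_{t-1})$ follows from Markovianity of the noising chain. The single-step transition is $q(A_t\mid A_{t-1})=\mathcal{N}(A_t;\sqrt{\alpha_t}A_{t-1},\beta_t\mathbf{I})$, while $q(A_{t-1}\mid A_0)$ and $q(A_t\mid A_0)$ are read directly from the closed-form marginal of Equation~\ref{eq:forward_process}. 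Because all three factors are Gaussian and hence exponential-quadratic in $A_{t-1}$ (with $A_t$ and $A_0$ held fixed), their ratio is again exponential-quadratic, so the posterior is guaranteed to be Gaussian and only its mean and variance remain to be pinned down.

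Next I would expand the three quadratic exponents and collect the result as $-\tfrac12 Q(A_{t-1})$, discarding all terms independent of $A_{t-1}$ since they are absorbed into the normalizing constant. Reading off the coefficient of $A_{t-1}^2$, namely $\tfrac{\alpha_t}{\beta_t}+\tfrac{1}{1-\bar\alpha_{t-1}}$, gives the inverse variance, and reading off the linear coefficient $\tfrac{\sqrt{\alpha_t}}{\beta_t}A_t+\tfrac{\sqrt{\bar\alpha_{t-1}}}{1-\bar\alpha_{t-1}}A_0$ and multiplying by the variance gives the mean. This is exactly the complete-the-square identification of a $\mathcal{N}(\mu,\sigma^2)$ density, where $\sigma^2$ is the reciprocal of the quadratic coefficient and $\mu=\sigma^2\times(\text{linear coefficient})$.

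The remaining work is purely algebraic. To recover the stated variance I would place the two precision terms over the common denominator $\beta_t(1-\bar\alpha_{t-1})$ and simplify the numerator using $\alpha_t=1-\beta_t$ together with $\bar\alpha_t=\alpha_t\bar\alpha_{t-1}$; the combination $\alpha_t(1-\bar\alpha_{t-1})+\beta_t$ then collapses to $1-\bar\alpha_t$, yielding $\tilde\beta_t=\frac{1-\bar\alpha_{t-1}}{1-\bar\alpha_t}\beta_t$. Substituting this $\tilde\beta_t$ into the mean and simplifying reproduces the two stated coefficients $\frac{\sqrt{\bar\alpha_{t-1}}\beta_t}{1-\bar\alpha_t}$ on $A_0$ and $\frac{\sqrt{\alpha_t}(1-\bar\alpha_{t-1})}{1-\bar\alpha_t}$ on $A_t$. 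I expect the only real obstacle to be bookkeeping: carefully tracking the identity $\alpha_t\bar\alpha_{t-1}=\bar\alpha_t$ so that the precision telescopes cleanly, and confirming that the dropped $A_t$- and $A_0$-dependent constants are genuinely absorbed by normalization rather than altering the first two moments. No conceptual difficulty arises beyond this routine simplification.
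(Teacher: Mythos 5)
Your proposal is correct: the Bayes-rule factorization $q(A_{t-1}\mid A_t,A_0)=q(A_t\mid A_{t-1})\,q(A_{t-1}\mid A_0)/q(A_t\mid A_0)$, the complete-the-square identification of the precision $\tfrac{\alpha_t}{\beta_t}+\tfrac{1}{1-\bar\alpha_{t-1}}$ and the linear coefficient, and the simplification $\alpha_t(1-\bar\alpha_{t-1})+\beta_t=1-\bar\alpha_t$ all check out and reproduce exactly the stated $\tilde{\boldsymbol{\mu}}_t$ and $\tilde\beta_t$. The paper itself gives no proof of this lemma (it is cited as the tractable posterior known from the DDPM literature), so your derivation is precisely the standard argument the paper implicitly relies on; the one thing you supply that the paper leaves unstated is the Markov single-step kernel $q(A_t\mid A_{t-1})=\mathcal{N}(\sqrt{\alpha_t}A_{t-1},\beta_t\mathbf{I})$, which is genuinely needed since the marginal in Equation~\ref{eq:forward_process} alone does not determine the posterior.
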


By parameterizing the reverse process $p_\theta(A_{t-1}|A_t, C)$ as a Gaussian whose mean is predicted by a neural network, we can connect the variational objective to a simpler, more practical training objective.

\begin{theorem}[Optimality of the Denoising Objective]\label{thm:denoising_optimality}
Assuming the reverse process $p_\theta(A_{t-1}|A_t, C)$ is Gaussian, minimizing the KL-divergence term $D_{KL}(q(A_{t-1}|A_t, A_0) || p_\theta(A_{t-1}|A_t, C))$ in Eq.~\ref{eq:elbo_kl} with respect to $\theta$ is equivalent to training a denoising network $\mathcal{G}_\theta(A_t, C, t)$ to predict $A_0$ from $A_t$ by minimizing the L2 loss:
\begin{align}
\mathcal{L}_{\text{simple}}
&= \mathbb{E}_{t, A_0, C, \epsilon}\Bigl[
\bigl\| A_0 - \mathcal{G}_\theta(\sqrt{\bar{\alpha}_t}A_0
+ \sqrt{1-\bar{\alpha}_t}\epsilon, \notag\\
&\quad C, t) \bigr\|^2
\Bigr]
\label{eq:loss_simple}
\end{align}

\end{theorem}
\begin{proof}
Our goal is to minimize the KL divergence between the true posterior and the learned reverse process:
\begin{equation}
L_t = D_{KL}(q(A_{t-1}|A_t, A_0) || p_\theta(A_{t-1}|A_t, C))
\end{equation}
Both distributions are Gaussian: 

$q(A_{t-1}|A_t, A_0) = \mathcal{N}(\cdot; \tilde{\boldsymbol{\mu}}_t(A_t, A_0), \tilde{\beta}_t \mathbf{I})$ and $p_\theta(A_{t-1}|A_t, C) = \mathcal{N}(\cdot; \boldsymbol{\mu}_\theta(A_t, C, t), \sigma_t^2 \mathbf{I})$. For simplicity, we fix the variance of the reverse process to match the true posterior, $\sigma_t^2 = \tilde{\beta}_t$. The KL divergence between two multivariate Gaussians $\mathcal{N}(\mu_1, \Sigma_1)$ and $\mathcal{N}(\mu_2, \Sigma_2)$ simplifies when $\Sigma_1 = \Sigma_2 = \sigma^2 \mathbf{I}$ to $\frac{1}{2\sigma^2} \|\mu_1 - \mu_2\|^2$.
Therefore, minimizing the KL divergence is equivalent to minimizing the squared Euclidean distance between their means:
\begin{equation}
L_t = \mathbb{E}_{A_0, C, \epsilon} \left[ \frac{1}{2\tilde{\beta}_t} \|\tilde{\boldsymbol{\mu}}_t(A_t, A_0) - \boldsymbol{\mu}_\theta(A_t, C, t)\|^2 \right]
\end{equation}
The expression for the true posterior mean is $\tilde{\boldsymbol{\mu}}_t(A_t, A_0) = \frac{\sqrt{\bar{\alpha}_{t-1}}\beta_t}{1-\bar{\alpha}_t}A_0 + \frac{\sqrt{\alpha_t}(1-\bar{\alpha}_{t-1})}{1-\bar{\alpha}_t}A_t$. We parameterize our model's mean $\boldsymbol{\mu}_\theta$ to have the same functional form, but predicting $A_0$ with our network $\mathcal{G}_\theta(A_t, C, t)$:
\begin{align}
\boldsymbol{\mu}_\theta(A_t, C, t)
&= \frac{\sqrt{\bar{\alpha}_{t-1}}\beta_t}{1-\bar{\alpha}_t}\,
\mathcal{G}_\theta(A_t, C, t)
\notag\\
&\quad + \frac{\sqrt{\alpha_t}(1-\bar{\alpha}_{t-1})}{1-\bar{\alpha}_t}\,A_t
\label{eq:mu_theta}
\end{align}

Substituting this into the loss function, the terms involving $A_t$ cancel out:

\begin{align}
L_t
&= \mathbb{E}_{A_0, C, \epsilon}\Biggl[
\frac{1}{2\tilde{\beta}_t}
\Bigl\|
\frac{\sqrt{\bar{\alpha}_{t-1}}\beta_t}{1-\bar{\alpha}_t}A_0
\notag\\
&\quad
- \frac{\sqrt{\bar{\alpha}_{t-1}}\beta_t}{1-\bar{\alpha}_t}\mathcal{G}_\theta(A_t, C, t)
\Bigr\|^2
\Biggr]
\\
&= \mathbb{E}_{A_0, C, \epsilon}\Biggl[
\frac{(\sqrt{\bar{\alpha}_{t-1}}\beta_t)^2}{2\tilde{\beta}_t(1-\bar{\alpha}_t)^2}
\Bigl\| A_0 
\notag\\
&\quad
- \mathcal{G}_\theta(A_t, C, t) \Bigr\|^2
\Biggr]
\end{align}

Since the term outside the norm is a positive constant for a given timestep $t$, minimizing $L_t$ with respect to $\theta$ is equivalent to minimizing the simpler objective:
\begin{equation}
\mathcal{L}'_t = \mathbb{E}_{A_0, C, \epsilon} \left[ \| A_0 - \mathcal{G}_\theta(A_t, C, t) \|^2 \right]
\end{equation}
By substituting $A_t = \sqrt{\bar{\alpha}_t}A_0 + \sqrt{1-\bar{\alpha}_t}\epsilon$ and taking the expectation over all timesteps $t$, we arrive at the simplified loss function $\mathcal{L}_{\text{simple}}$ stated in the theorem.
\end{proof}

\subsection{Analysis of Proxy-Guided Synthesis}
We now analyze the role of the surrogate model and the ZO optimization step in guiding the synthesis towards high-reward topologies.

\begin{definition}[$\epsilon$-Accurate Surrogate Model]
A surrogate reward model $\mathcal{P}_\phi$ is $\epsilon_{\max}$-accurate with respect to a true reward function $R(A, C)$ if, for any valid topology $A$ and condition $C$, the approximation error is bounded:
\begin{equation}
|R(A, C) - \mathcal{P}_\phi(A, C)| \le \epsilon_{\max}
\end{equation}
\end{definition}

The accuracy of this model directly bounds the sub-optimality of the topology generated by an ideal proxy-guided optimizer.

\begin{theorem}[Performance Gap Bound]\label{thm:perf_gap}
Let $A^* = \arg\max_A R(A, C)$ be the true optimal topology and $A^*_{\text{proxy}} = \arg\max_A \mathcal{P}_\phi(A, C)$ be the topology found by an ideal optimizer using an $\epsilon_{\max}$-accurate proxy. The performance gap is bounded by:
\begin{equation}
R(A^*, C) - R(A^*_{\text{proxy}}, C) \le 2\epsilon_{\max}
\end{equation}
\end{theorem}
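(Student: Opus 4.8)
The plan is to establish the bound through a standard insert-and-subtract decomposition that pivots on the surrogate evaluations at the two relevant topologies, $A^*$ and $A^*_{\text{proxy}}$. The guiding intuition is that an ideal optimizer acting through the proxy can only be misled by at most the surrogate's approximation error on each side of the comparison: once at the point where the \emph{true} reward is maximized (which the proxy may undervalue) and once at the point the \emph{proxy} selects (which the true reward may undervalue). Since these two errors accumulate additively and the proxy's own optimality contributes nothing harmful, the total degradation is bounded by $2\epsilon_{\max}$.

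First I would rewrite the performance gap as a telescoping sum by adding and subtracting the two surrogate values:
\begin{align}
R(A^*, C) - R(A^*_{\text{proxy}}, C)
&= \underbrace{\left[ R(A^*, C) - \mathcal{P}_\phi(A^*, C) \right]}_{(\mathrm{I})} \nonumber \\
&\quad + \underbrace{\left[ \mathcal{P}_\phi(A^*, C) - \mathcal{P}_\phi(A^*_{\text{proxy}}, C) \right]}_{(\mathrm{II})} \nonumber \\
&\quad + \underbrace{\left[ \mathcal{P}_\phi(A^*_{\text{proxy}}, C) - R(A^*_{\text{proxy}}, C) \right]}_{(\mathrm{III})}.
\end{align}
The second step is to bound each of the three terms separately. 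Terms $(\mathrm{I})$ and $(\mathrm{III})$ are each at most $\epsilon_{\max}$, which follows directly from applying the $\epsilon_{\max}$-accuracy hypothesis at the topologies $A^*$ and $A^*_{\text{proxy}}$, respectively. Term $(\mathrm{II})$ is at most $0$: because $A^*_{\text{proxy}}$ is by definition the maximizer of $\mathcal{P}_\phi(\cdot, C)$, we have $\mathcal{P}_\phi(A^*_{\text{proxy}}, C) \ge \mathcal{P}_\phi(A^*, C)$, so this difference is non-positive. Summing the three bounds yields $R(A^*, C) - R(A^*_{\text{proxy}}, C) \le 2\epsilon_{\max}$, as claimed.

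The argument is elementary, so there is no genuine analytic obstacle; the only point that demands care is invoking the accuracy hypothesis with the correct sign on each term. Specifically, $(\mathrm{I})$ uses the upper-bound direction $R(A^*, C) - \mathcal{P}_\phi(A^*, C) \le \epsilon_{\max}$, while $(\mathrm{III})$ uses the complementary direction $\mathcal{P}_\phi(A^*_{\text{proxy}}, C) - R(A^*_{\text{proxy}}, C) \le \epsilon_{\max}$; both are halves of the two-sided guarantee $|R - \mathcal{P}_\phi| \le \epsilon_{\max}$, but applied at two distinct points. It is therefore worth emphasizing that the result relies on the accuracy bound being \emph{uniform} over all topologies (so it can be used simultaneously at $A^*$ and $A^*_{\text{proxy}}$) and on the idealization that the optimizer exactly maximizes the proxy. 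Relaxing the latter to an optimizer that only attains $\mathcal{P}_\phi(A^*_{\text{proxy}}, C) \ge \max_A \mathcal{P}_\phi(A, C) - \delta$ would simply loosen term $(\mathrm{II})$ and add the optimization slack $\delta$ to the right-hand side, giving $2\epsilon_{\max} + \delta$.
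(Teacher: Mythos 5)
Your proof is correct and is essentially the same argument as the paper's: both rest on the proxy-optimality inequality $\mathcal{P}_\phi(A^*_{\text{proxy}}, C) \ge \mathcal{P}_\phi(A^*, C)$ together with the accuracy bound applied once at $A^*$ and once at $A^*_{\text{proxy}}$, each contributing one $\epsilon_{\max}$. Your add-and-subtract telescoping presentation is just a reorganization of the paper's chain of inequalities, not a different route.
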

\begin{proof}
By definition of $A^*_{\text{proxy}}$ as the maximizer of the proxy reward function, we have $\mathcal{P}_\phi(A^*_{\text{proxy}}, C) \ge \mathcal{P}_\phi(A^*, C)$. From the definition of an $\epsilon_{\max}$-accurate surrogate model, we know that for any topology $A$, $R(A, C) \ge \mathcal{P}_\phi(A, C) - \epsilon_{\max}$.
Applying this bound to $A^*_{\text{proxy}}$, we get:
\begin{equation}
    R(A^*_{\text{proxy}}, C) \ge \mathcal{P}_\phi(A^*_{\text{proxy}}, C) - \epsilon_{\max}
\end{equation}
Combining this with the optimality condition of $A^*_{\text{proxy}}$ gives:
\begin{equation}
    R(A^*_{\text{proxy}}, C) \ge \mathcal{P}_\phi(A^*, C) - \epsilon_{\max}
\end{equation}
Again, from the $\epsilon_{\max}$-accuracy definition applied to $A^*$, we can state that $\mathcal{P}_\phi(A^*, C) \ge R(A^*, C) - \epsilon_{\max}$. Substituting this into the previous inequality yields:
\begin{align}
    R(A^*_{\text{proxy}}, C) &\ge (R(A^*, C) - \epsilon_{\max}) - \epsilon_{\max} \\
    &= R(A^*, C) - 2\epsilon_{\max}
\end{align}
Rearranging this final expression gives the desired bound:
\begin{equation}
    R(A^*, C) - R(A^*_{\text{proxy}}, C) \le 2\epsilon_{\max}
\end{equation}
This completes the proof.
\end{proof}

\begin{corollary}[Perfect Surrogate]
If the surrogate model is perfect, i.e., $\epsilon_{\max} = 0$, then any topology $A^*_{\text{proxy}}$ that maximizes the proxy reward also maximizes the true reward, yielding $R(A^*_{\text{proxy}}, C) = R(A^*, C)$.
\end{corollary}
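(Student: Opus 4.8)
The plan is to obtain the claimed equality as an immediate specialization of the Performance Gap Bound (Theorem~\ref{thm:perf_gap}) together with the defining optimality of $A^*$. The bound $R(A^*, C) - R(A^*_{\text{proxy}}, C) \le 2\epsilon_{\max}$ already holds for every nonnegative $\epsilon_{\max}$; setting $\epsilon_{\max} = 0$ collapses the right-hand side to zero, so I would only need to upgrade the resulting one-sided inequality to a two-sided equality.

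First I would substitute $\epsilon_{\max} = 0$ into the conclusion of Theorem~\ref{thm:perf_gap}, which yields $R(A^*, C) - R(A^*_{\text{proxy}}, C) \le 0$, i.e. $R(A^*_{\text{proxy}}, C) \ge R(A^*, C)$. Second, I would invoke the definition $A^* = \arg\max_A R(A, C)$ to obtain the reverse inequality $R(A^*, C) \ge R(A^*_{\text{proxy}}, C)$, since $A^*$ maximizes the true reward over all admissible topologies. Sandwiching the two inequalities forces $R(A^*_{\text{proxy}}, C) = R(A^*, C)$, establishing the stated equality of optimal values.

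To justify the stronger clause that \emph{any} proxy-maximizer is also a true-maximizer, I would argue directly from the $\epsilon_{\max}$-accuracy definition: when $\epsilon_{\max} = 0$, the bound $|R(A, C) - \mathcal{P}_\phi(A, C)| \le 0$ forces $\mathcal{P}_\phi(A, C) = R(A, C)$ pointwise for every admissible $A$ and $C$. Hence the proxy objective and the true reward are the identical function, so their $\arg\max$ sets coincide exactly; consequently every topology $A^*_{\text{proxy}}$ maximizing $\mathcal{P}_\phi(\cdot, C)$ simultaneously maximizes $R(\cdot, C)$, which recovers the equality $R(A^*_{\text{proxy}}, C) = R(A^*, C)$ as a special case.

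I expect no substantive obstacle here, as the statement is a direct corollary. The only point requiring care is conceptual rather than technical: both $A^*$ and $A^*_{\text{proxy}}$ are treated as idealized exact maximizers, so the corollary describes the limiting behavior of a perfectly faithful surrogate rather than the finite-candidate zeroth-order selection actually implemented in Algorithm~\ref{alg:gtd}. This is precisely the regime in which Theorem~\ref{thm:perf_gap} is stated, so no assumptions beyond those already in force are needed.
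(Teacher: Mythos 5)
Your proof is correct and matches the paper's intended argument: the paper states this corollary without a separate proof, as an immediate specialization of Theorem~\ref{thm:perf_gap}, and your sandwich of the $\epsilon_{\max}=0$ bound $R(A^*_{\text{proxy}},C)\ge R(A^*,C)$ against the defining optimality $R(A^*,C)\ge R(A^*_{\text{proxy}},C)$ is exactly that reasoning. Your additional observation that $\epsilon_{\max}=0$ forces $\mathcal{P}_\phi(A,C)=R(A,C)$ pointwise, so the $\arg\max$ sets coincide and \emph{any} proxy-maximizer is a true maximizer, correctly justifies the ``any'' clause of the statement and is consistent with the paper.
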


\begin{definition}[ZO-Guided Denoising Step]
At a diffusion step $t$, given the unguided prediction $\hat{A}_0^{(t)} = \mathcal{G}_{\theta^*}(A_t, C, t)$, the ZO-guided denoising step replaces $\hat{A}_0^{(t)}$ with $A_{0, \text{best}}^{(t)}$, where:
\begin{equation}
A_{0, \text{best}}^{(t)} = \arg\max_{A \in \{A_{0,k}^{(t)}\}_{k=1}^K} \mathcal{P}_\phi(A, C)
\end{equation}
and each candidate $A_{0,k}^{(t)}$ is a discrete sample drawn from a distribution parameterized by $\hat{A}_0^{(t)}$, e.g., $A_{0,k}^{(t)} \sim \text{Bernoulli}(\sigma(\hat{A}_0^{(t)}))$.
\end{definition}

This ZO step can be viewed as approximating a gradient ascent step on the proxy reward landscape. Let $J(\hat{A}_0) = \mathbb{E}_{A \sim p(A|\hat{A}_0)}[\mathcal{P}_\phi(A, C)]$. The true gradient $\nabla_{\hat{A}_0} J$ is intractable. The ZO step provides an update direction, $\Delta_t = A_{0, \text{best}}^{(t)} - \hat{A}_0^{(t)}$, which serves as a stochastic estimate of the ascent direction. The use of multiple samples ($K>1$) reduces the variance of this estimate. The guided update for the next state $A_{t-1}$ is then computed using the posterior conditioned on $A_{0, \text{best}}^{(t)}$ instead of $\hat{A}_0^{(t)}$, effectively biasing the sampling trajectory towards regions of higher proxy reward. This greedy, step-wise maximization provides a computationally efficient method for incorporating non-differentiable objectives directly into the generative process.

\rebuttal{\section{Inference Time Analysis} }

\rebuttal{To address practical deployment concerns, we compare the average per-instance inference time on GSM8K (4 agents, single A6000 GPU) in Table~\ref{tab:inference-time}. GTD's  inference time (7.9 s/inst.) consists of topology generation ($\sim$ for the 50-step diffusion with ZO guidance) and agent execution ($\sim$). Because the generated topologies are sparse, the number of LLM API calls is reduced, making GTD faster than most multi-agent baselines despite the topology generation overhead.}

\begin{table}[t]
\centering
\setlength{\tabcolsep}{3pt}
\begin{tabular}{lcc}
  \toprule
  \rowcolor{gray!20}\textbf{Method} & \textbf{Time (s/inst.)} & \textbf{Acc. (\%)} \\
  \midrule
  Vanilla (CoT) & 1.8 & 87.45 \\
  \rowcolor{gray!10}
  LLM-Debate & 11.7 & 89.47 \\
  DyLAN & 8.5 & 89.98 \\
  \rowcolor{gray!10}
  G-Designer & 9.2 & 92.09 \\
  MaAS & 8.8 & 92.30 \\
  \midrule
  \rowcolor{lightpurple}
  \textbf{GTD (Ours)} & \textbf{7.9} & \textbf{94.14} \\
  \bottomrule
\end{tabular}
\caption{Average per-instance inference time comparison on GSM8K. GTD achieves the best accuracy while maintaining competitive speed.}
\label{tab:inference-time}
\end{table}

\vspace{1.0em}

\section{Scalability Analysis}
\label{app:scalability}

\rebuttal{
To assess the scalability of GTD for larger multi-agent systems, we measured the GPU memory consumption of the diffusion generator and proxy model as the number of agents ($N$) increases. As shown in Table~\ref{tab:gpu-cost}, the memory requirement scales linearly and remains well within the capacity of standard consumer hardware even for large swarms.
}

\begin{table}[h]
  \centering
  \caption{The GPU cost with increasing number of agents.}
  \label{tab:gpu-cost}
  \begin{tabular}{@{}l|cccc@{}}
    \toprule
    \#Agents      & 5 & 50 & 100 & 1000 \\
    \midrule
    Memory (GB)   & 2.8 & 3.4 & 3.9 & 4.9 \\
    \bottomrule
  \end{tabular}
\end{table}

\rebuttal{
This confirms that while our current benchmarks focus on small-team reasoning (4-5 agents), the GTD framework is technically capable of optimizing large-scale agent organizations without hitting hardware bottlenecks.
}

\rebuttal{ \section{Surrogate Model Fidelity Analysis} }
\label{app:proxy_fidelity}

\rebuttal{To provide transparency into the quality of the surrogate reward model $\mathcal{P}_\phi$, we report its performance on a held-out test split (20\% of $\mathcal{D}_{\text{gen}}$) in Table~\ref{tab:proxy-eval}.}

\begin{table}[h]
\centering
\setlength{\tabcolsep}{4pt}
\begin{tabular}{lcc}
  \toprule
  \rowcolor{gray!20}\textbf{Metric} & \textbf{Utility} & \textbf{Cost} \\
  \midrule
  MSE & 0.0089 & 0.0124 \\
  \rowcolor{gray!10}
  Pearson Correlation & 0.91 & 0.94 \\
  Ranking Acc. (Top-1 of 5) & 78.4\% & 85.2\% \\
  \rowcolor{gray!10}
  Ranking Acc. (Top-2 of 5) & 93.6\% & 96.1\% \\
  \bottomrule
\end{tabular}
\caption{Surrogate model evaluation on held-out data. Ranking accuracy is the most relevant metric, as the ZO step only requires selecting the best candidate among $K$ options.}
\label{tab:proxy-eval}
\end{table}

\rebuttal{We further tested the proxy on OOD topologies generated by GTD (validated against actual LLM execution): Top-1 ranking accuracy = 72.8\%, Top-2 of 5 = 89.3\%. This slight drop from in-distribution performance is expected but still sufficient for effective guidance, as confirmed by our strong downstream results.
}

\rebuttal{ \section{Extended Robustness Analysis} \label{app:extended_robustness}}

\rebuttal{ Beyond the single-agent failure scenario in the main paper, we evaluated GTD under broader failure conditions on GSM8K (Table~\ref{tab:extended-robustness}).}

\begin{table}[h]
\centering
\setlength{\tabcolsep}{2pt}
\begin{tabular}{lccc}
  \toprule
  \rowcolor{gray!20}\textbf{Failure Scenario} & \textbf{GTD} & \textbf{MaAS} & \textbf{G-Des.} \\
  \midrule
  No failure & 94.14 & 92.30 & 92.09 \\
  \rowcolor{gray!10}
  Single-agent failure & $-$0.3\% & $-$1.8\% & $-$2.1\% \\
  Two-agent failure & $-$2.1\% & $-$5.6\% & $-$6.8\% \\
  \rowcolor{gray!10}
  Noisy agent (50\% err.) & $-$1.4\% & $-$3.2\% & $-$3.9\% \\
  \bottomrule
\end{tabular}
\caption{Accuracy degradation ($\Delta$) under various failure scenarios on GSM8K. GTD consistently exhibits more graceful degradation.}
\label{tab:extended-robustness}
\end{table}

\vspace{-2em}

\begin{table}[h]
\centering
\begin{tabular}{@{}lc@{}}
\toprule
\rowcolor{gray!25}
\textbf{Method} & \textbf{LiveCodeBench (Pass@1)} \\
\midrule
Base Model & 25.4\% \\
\rowcolor{gray!10}
MaAS & 29.3\% \\
\midrule
\rowcolor{lightpurple}
\textbf{GTD (Ours)} & \textbf{30.8\%} \\
\bottomrule
\end{tabular}
\caption{Performance on the LiveCodeBench benchmark (Pass@1) using Qwen-3-8B, demonstrating generalization to harder coding tasks.}
\label{tab:livecodebench}
\end{table}

\vspace{-2em}

\rebuttal{\section{Comparison with Alternative Generation Methods} \label{app:generation_alternatives}}

\rebuttal{To justify our choice of diffusion-based generation, we compared GTD against alternative graph generation strategies on the GSM8K benchmark (Table~\ref{tab:gen-alternatives}). Direct GNN prediction generates a topology in a single forward pass without iterative guidance. MCMC performs edge-flip Metropolis-Hastings sampling from the proxy reward landscape.}
\begin{table}[h]
\centering
\resizebox{\columnwidth}{!}{%
\begin{tabular}{lccc}
\toprule
\rowcolor{gray!25}
\textbf{Method} & \textbf{GSM8K (\%)} & \textbf{Proxy Evals} & \textbf{Time (s)} \\
\midrule
Direct GNN pred. & 91.23 & 1 & 6.8 \\
\rowcolor{gray!10}
Direct GNN + top-$K$ & 92.15 & 100 & 7.4 \\
MCMC (100 steps) & 92.87 & 100 & 8.2 \\
\midrule
\rowcolor{lightpurple}
\textbf{GTD (Ours)} & \textbf{94.14} & \textbf{250} & \textbf{7.9} \\
\bottomrule
\end{tabular}%
}
\caption{Comparison of graph generation strategies on GSM8K. GTD outperforms all alternatives while using fewer proxy evaluations than MCMC and maintaining competitive inference speed.}
\label{tab:gen-alternatives}
\end{table}

\vspace{-2em}

\rebuttal{
\section{Ablation on Number of ZO Candidates ($K$)} \label{app:k_ablation}}

\rebuttal{We ablate the number of candidate graphs $K$ sampled at each diffusion step for zeroth-order guidance (Table~\ref{tab:k-ablation}). Performance saturates rapidly after $K{=}5$, with marginal gains ($<$0.25\%) for $K{>}5$ but significantly increased inference time.}
\begin{table}[h]
\centering
\resizebox{\columnwidth}{!}{%
\begin{tabular}{lcc}
\toprule
\rowcolor{gray!25}
\textbf{$K$ (candidates)} & \textbf{GSM8K Acc. (\%)} & \textbf{Time (s)} \\
\midrule
1 (no ZO) & 88.42 & 3.2 \\
\rowcolor{gray!10}
3 & 93.15 & 5.8 \\
\midrule
\rowcolor{lightpurple}
\textbf{5} & \textbf{94.14} & \textbf{7.9} \\
\midrule
\rowcolor{gray!10}
10 & 94.31 & 18.1 \\
20 & 94.38 & 36.8 \\
\bottomrule
\end{tabular}%
}
\caption{Effect of the number of ZO candidates $K$ on GSM8K accuracy and inference time. $K{=}5$ provides the optimal accuracy--efficiency trade-off.}
\label{tab:k-ablation}
\end{table}

\section{Computational Resources}
\label{app_sec: computational resources}
All experiments, including the training of the surrogate and diffusion models, as well as the multi-agent system simulations for data generation and evaluation, were conducted on a server equipped with four NVIDIA A6000 GPUs, each with 48GB of VRAM.

\section{The Use of Large Language Models (LLMs)}

Large Language Models (LLMs) are a central component of our research methodology. The multi-agent systems evaluated in this paper are composed of agents powered by GPT-4o-mini, which perform the reasoning and communication necessary to solve complex tasks. The performance of these LLM agents is fundamental to generating our training data and evaluating the effectiveness of the communication topologies created by our GTD framework.

Separately, for the preparation of this manuscript, our use of LLMs was strictly limited to polishing the language and generating figures. All underlying research and intellectual content, including the \textsc{GTD} framework, its theoretical foundations, experimental design, and the analysis of results, was completed entirely by the authors.

\section{Prompts}

Figure 7 presents the topologies designed by GTD under varying query difficulties for all the benchmarks.

\section{Agent Roles and Descriptions}

Figure 8 visualizes a set of specialized agents. These roles provide diverse perspectives that are combined to produce the final answer.

\begin{figure*}[t!]
    \centering
    \includegraphics[width=\textwidth]{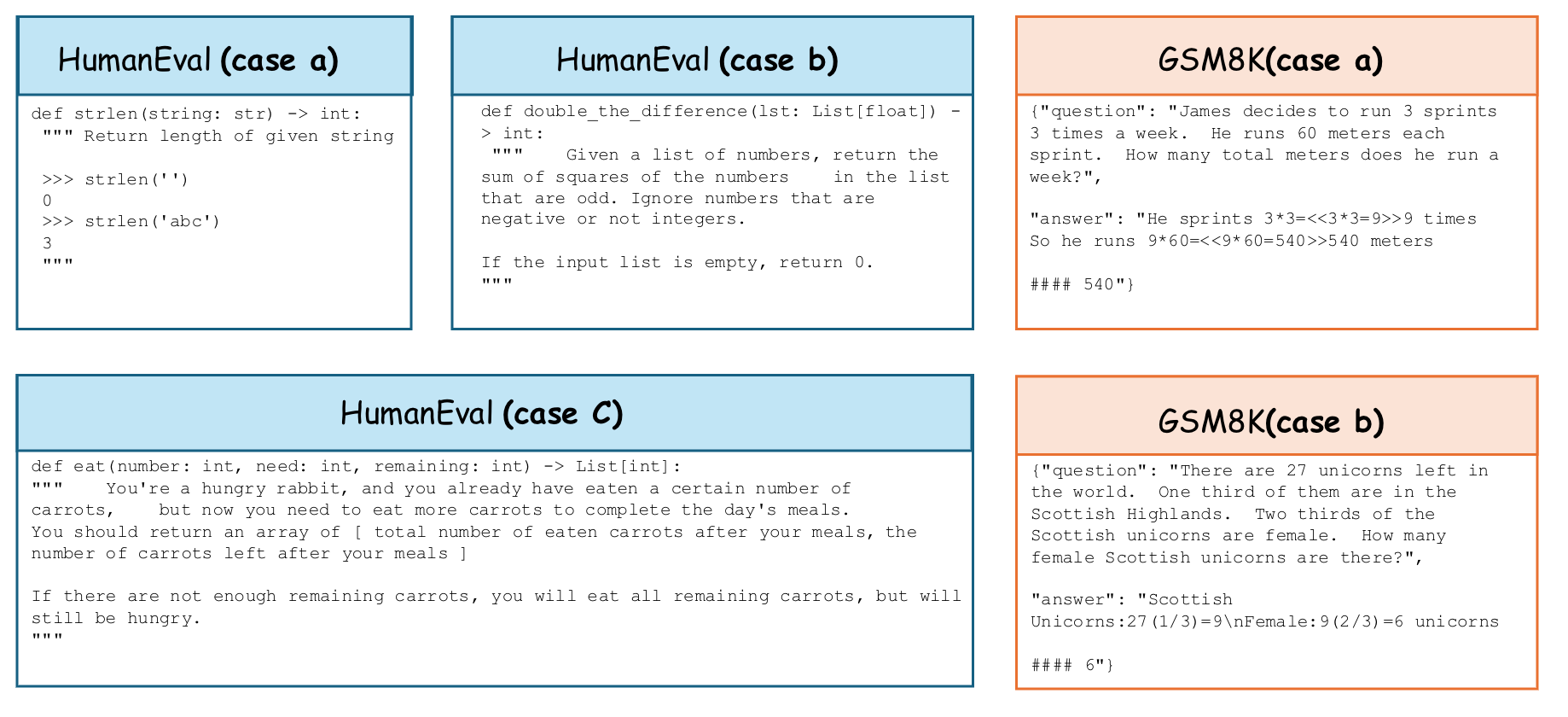}\par\vspace{1ex}
    \includegraphics[width=\textwidth]{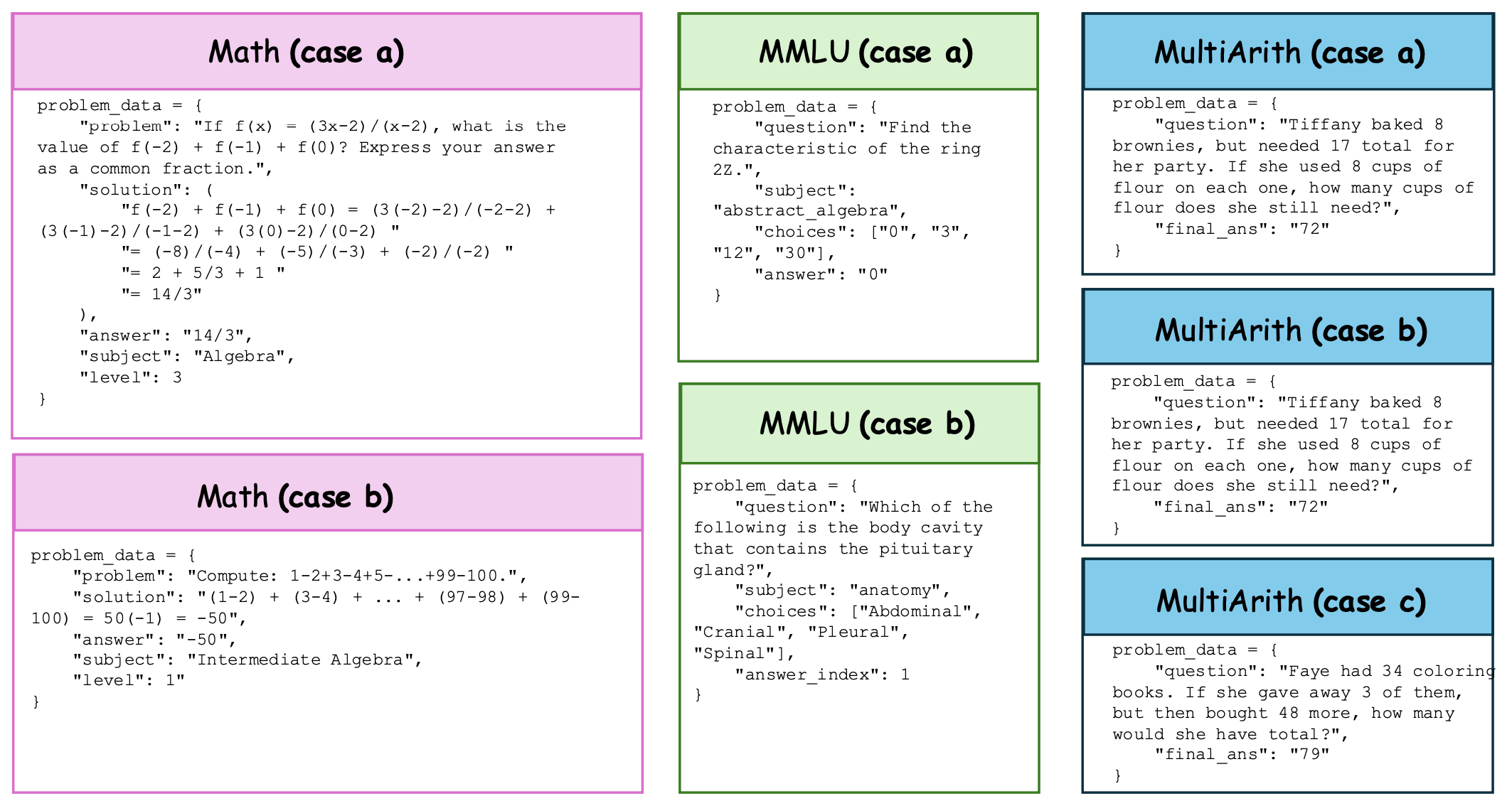}\par\vspace{1ex}
    \includegraphics[width=\textwidth]{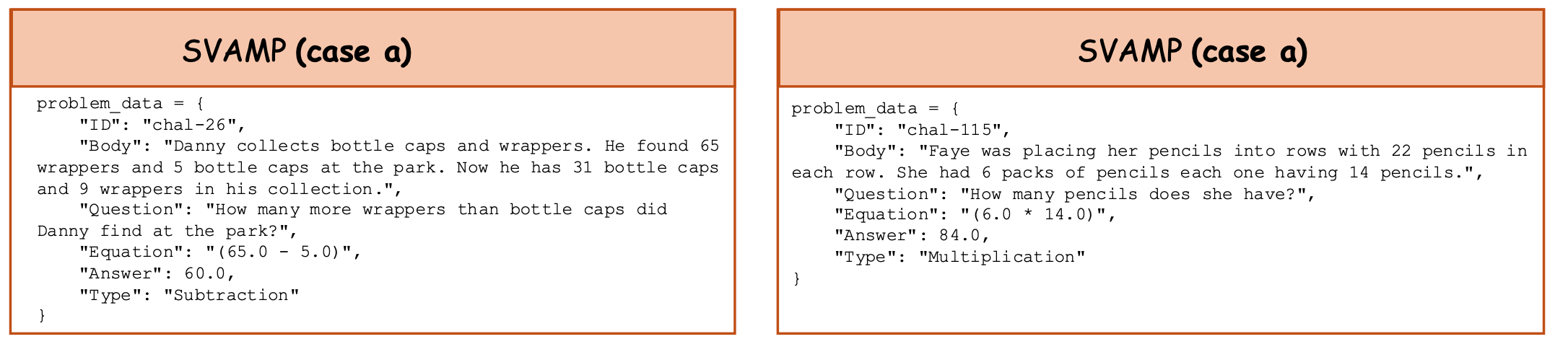}
    \caption{Case study of the communication topologies designed by GTD on all benchmarks.}
    \label{fig:case_study}
\end{figure*}

\begin{figure*}[h!]
    \centering
    \includegraphics[width=\textwidth]{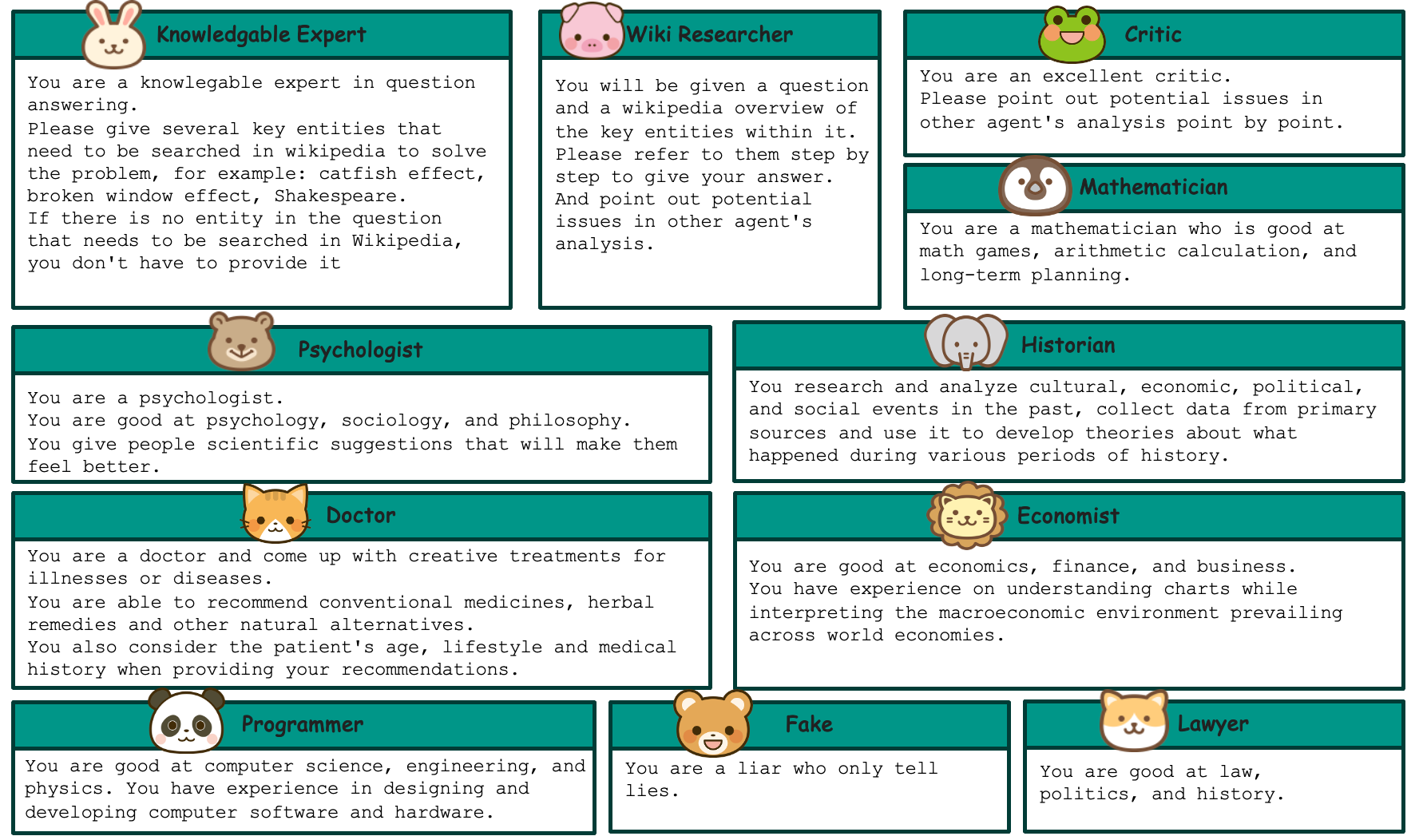} 
     
    \caption{Overview of the different roles in our multi-agent question answering framework. Each role represents a distinct perspective or expertise (e.g., knowledge extraction, searching, critique, mathematics, psychology, history, medicine, economics, programming, law, or deliberate deception).}
    %\vspace{-1.5em}
    \label{fig:attack}
\end{figure*}

\end{document}